\documentclass{article}

\usepackage{iclr2027_conference,times}

\makeatletter
\def\@iclrnotice{Preprint}
\makeatother

\usepackage[T1]{fontenc}
\usepackage{amsmath,amssymb,amsfonts,amsthm}
\usepackage{algorithm}
\usepackage{algorithmic}
\usepackage{booktabs}
\usepackage[table]{xcolor}
\usepackage{graphicx}
\usepackage{float}
\usepackage{microtype}
\usepackage{url}
\usepackage[hidelinks]{hyperref}
\usepackage{tabularx}
\usepackage{enumitem}

\newtheorem{proposition}{Proposition}

\newtheorem{lemma}[proposition]{Lemma}

\newtheorem{theorem}[proposition]{Theorem}

\DeclareMathOperator{\clip}{clip}
\DeclareMathOperator{\polar}{polar}
\DeclareMathOperator{\NS}{Newton-Schulz}
\newcommand{\fnorm}[1]{\left\lVert #1\right\rVert_{\mathrm F}}
\newcommand{\inner}[2]{\left\langle #1,#2\right\rangle_{\mathrm F}}

\title{Inner Momentum for Differentially Private Muon}

\author{
\begin{tabular}{cccc}
\textbf{Bishnu Bhusal}$^{1,2}$ &
\textbf{Minh Vu}$^{1}$ &
\textbf{Ben Southworth}$^{1}$ &
\textbf{Geigh Zollicoffer}$^{1,3}$
\end{tabular}
\\[3pt]
\begin{tabular}{cc}
\textbf{Rohit Chadha}$^{2}$ \qquad &
\textbf{Manish Bhattarai}$^{1}$
\end{tabular}
\\[5pt]
{\footnotesize
$^{1}$Los Alamos National Laboratory \qquad
$^{2}$University of Missouri \qquad
$^{3}$Georgia Institute of Technology
}
}

\newcommand{\rc}[1]{{\color{blue}#1}}

\iclrfinalcopy

\begin{document}

\maketitle

\begin{abstract}

Differentially private training clips each per-example gradient before adding noise. This clipping is radial for each example, yet unequal clipping factors can distort the relative singular-vector geometry of their average. Muon is particularly exposed to this effect, since its update is an approximate polar factor $UV^\top$ that depends only on the singular vectors that clipping can shift. To curb this degradation, we propose averaging each sampled example's Muon gradient over the current model and a short history of recent models before clipping. The clipped batch matrix then separates into a common rescaling and a covariance residual $R$ between sampled gradients and clipping values, with $\|R\|_F \le \sigma_\lambda \sigma_G$, bounding the clipping-induced distortion directly. We further show that a finite Newton–Schulz iteration preserves the polar factor of its input under these spectral conditions, confirming that our correction survives orthogonalization. In private GPT-2 fine-tuning on E2E and DART at $\epsilon \in \{1, 2, 4, 8\}$, DP-Muon-IM improves BLEU and ROUGE-L over DP-Muon in every seed-matched comparison, and non-private diagnostics show 2--4\% lower pre-noise polar error.

\end{abstract}

\section{Introduction}\label{sec:intro}

Training a neural network can reveal information about individual training records. Differential privacy (DP) bounds this leakage by ensuring that a model's output distribution changes only slightly depending on whether any single record was used in training~\citep{dwork2014algorithmic}. A standard approach in private deep learning is DP stochastic gradient descent (DPSGD). It clips each per-example gradient before averaging, which limits the influence of one record, and adds Gaussian noise calibrated to that limit~\citep{abadi2016deep}. Both operations can affect utility. Noise perturbs the averaged gradient, while clipping can change its direction by rescaling per-example gradients unequally~\citep{chen2020geometric}.

This work studies how clipping affects Muon~\citep{jordan2024muon}, an optimizer that approximately orthogonalizes matrix-valued momentum using a small number of Newton--Schulz iterations. Muon has recently been used alongside AdamW~\citep{AdamW} to train modern large language models, including DeepSeek-V4~\citep{deepseek2026v4} and Kimi K2 via MuonClip~\citep{kimi2025k2}, with Muon-based training outperforming AdamW baselines on small transformers and Moonlight~\citep{liu2025muon}. Motivated by these gains, we ask whether Muon's advantages persist under the per-example gradient clipping and Gaussian noise required for differentially private training. In fact, per-example clipping affects Muon by changing the relative weights of the gradients in the batch average. As shown in Fig.~\ref{fig:intuition}, when clipping factors differ across examples, the clipped average can have different singular-vector bases, and hence a different polar factor~\citep{chen2020geometric,xiao2023theory}. DP-Muon~\citep{kim2026dpmuon} adds Gaussian noise to this clipped average and then applies momentum and Newton–Schulz. Because the Newton--Schulz iteration preserves the polar factor of its input in exact arithmetic, it cannot correct clipping-induced changes to that polar factor.

\begin{figure}
    \centering
    \includegraphics[width=0.95\linewidth]{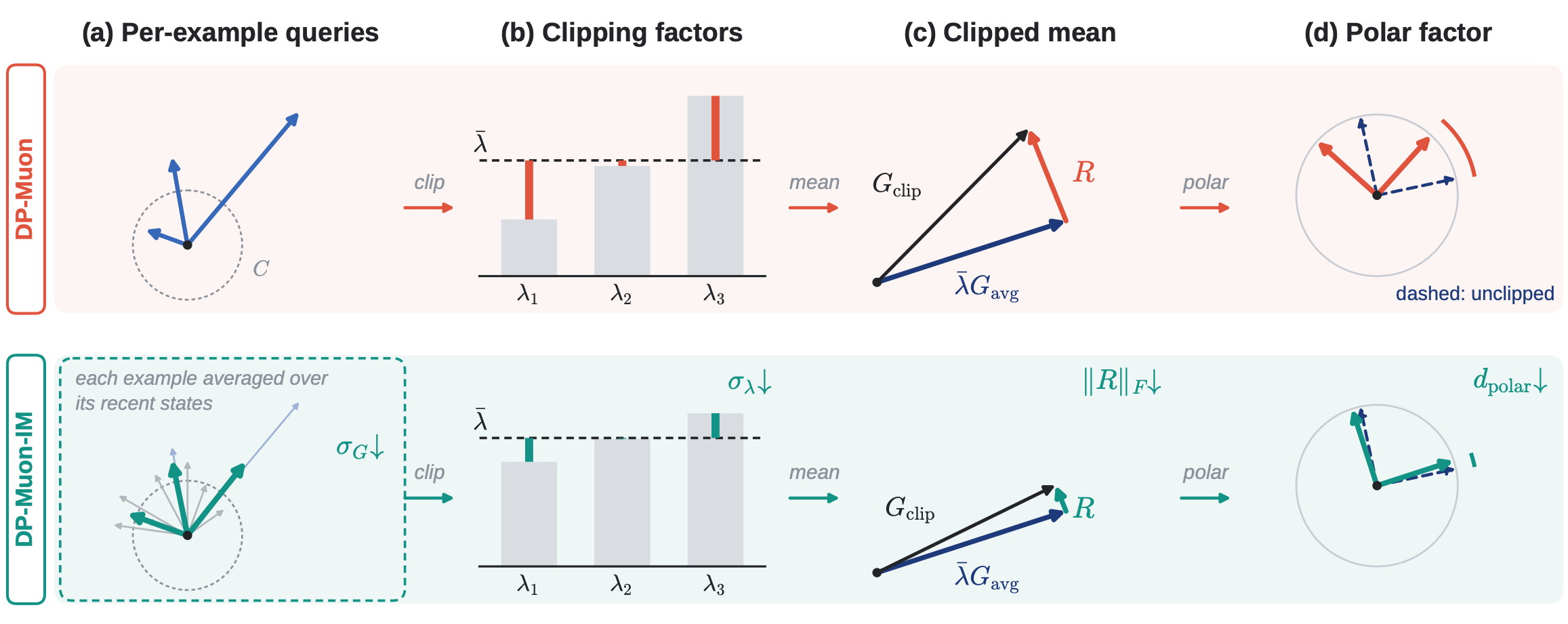}
    \caption{\textbf{Intuition for DP-Muon with inner momentum (DP-Muon-IM).} \textbf{(a)} DP-Muon-IM averages each example's Muon gradients at the current and recent private model states before flat clipping. \textbf{(b-c)} Unequal clipping factors introduce a residual $R$ relative to a common rescaling of the batch mean. IM averages per-example gradients before this reweighting. \textbf{(d)} The residual can change the polar factor. We measure residual size and polar error separately; the utility experiments evaluate the complete method.}
    \label{fig:intuition}
    \vspace{-0.3cm}
\end{figure}

Instead, we propose to smooth each sampled example’s Muon gradient across the current and recent private model states before clipping. Under the stated variance and temporal-covariance assumptions, averaging can tighten a bound on the expected clipping-residual norm. We measure the realized residual and polar error separately. The resulting method, DP-Muon-IM, adapts the inner-momentum approach of \citet{xiao2023theory} to realize that intuition. We further formalize this idea by decomposing the clipped average into a common-rescaling term, which preserves the desired polar factor, and a residual governed by variation in the gradients and clipping factors. Privacy accounting remains unchanged: each example contributes a single query clipped to the same radius as in the baseline Gaussian mechanism, with the same sampling procedure and release schedule. In GPT-2 fine-tuning on E2E and DART, DP-Muon-IM improves BLEU and ROUGE-L over DP-Muon.
 
Our contributions are as follows.

 \vspace{-0.2cm}
\begin{itemize}[leftmargin=6pt]
    \item Section~\ref{sec:clipping} explains how clipping can alter the matrix geometry relevant to Muon by decomposing the clipped average into a common rescaling and a residual $R$, with $\fnorm{R}\le\sigma_\lambda\sigma_G$. We also show that the stated finite Newton--Schulz iteration preserves its immediate input's polar factor in exact arithmetic, so it cannot remove polar error present at that stage.

   \item Section~\ref{sec:method} introduces DP-Muon-IM, which averages each sampled example's gradients at current and recent private model states before clipping. The sampling rule, clipping radius, noise calibration, release schedule, and baseline per-run privacy accounting remain unchanged.

    \item Section~\ref{sec:experiments} presents private GPT-2 experiments showing that DP-Muon-IM improves both BLEU and ROUGE-L scores on E2E and DART compared with DP-Muon. We report E2E and DART results at target $\epsilon\in\{1,2,4,8\}$, and a single-seed history-length ablation quantifies the tradeoff between generation quality, training time, and memory.
\end{itemize}

\textbf{Organization.}
Section~\ref{sec:setting} introduces notation and background. Section~\ref{sec:clipping} analyzes clipping-induced distortion, and Section~\ref{sec:method} describes DP-Muon-IM. Section~\ref{sec:experiments} reports the experiments. Sections~\ref{sec:related-work}--\ref{sec:conclusion} cover related work, discussion, and conclusions.

\vspace{-0.2cm}
\section{Preliminaries}
\label{sec:setting}


\vspace{-0.2cm}
\subsection{Problem setup and notation}
\label{sec:problem-setup}
\vspace{-0.2cm}

Let $\mathcal D=\{x_i\}_{i=1}^n$ be the private training set, let $\theta$ denote the model parameters, and let $\ell(\theta;x_i)$ be the loss on example $x_i$. Training seeks to minimize the empirical risk
\begin{equation}
F(\theta)=\frac{1}{n}\sum_{i=1}^n\ell(\theta;x_i).
\label{eq:empirical-risk}
\end{equation}
At training step $t$, the optimizer samples a minibatch $S_t$ (also called a \emph{lot} in DP training), and we write $m=|S_t|$ for its realized size. The per-example gradient is
\begin{equation}
G_t^{(i)}=\nabla_\theta\ell(\theta_t;x_i).
\label{eq:complete-gradient}
\end{equation}
Throughout we treat a gradient as a vector over all trainable parameters. However, algorithmically Muon operates on matrix representations of specific subsets of parameters. Thus let $\mathcal W$ be the set of hidden-layer matrices operated on by Muon, with each matrix corresponding to a specific subset of parameter indices. For any $W\in\mathcal W$, the corresponding block of $G_t^{(i)}$ is implicitly converted to its matrix form,
\begin{equation}
G_{t,W}^{(i)}=\nabla_W\ell(\theta_t;x_i) \in\mathbb{R}^{n^{W}_{\mathrm{out}}\times n^{W}_{\mathrm{in}}}.
\label{eq:sample-gradient}
\end{equation}
Muon acts separately on each $W\in\mathcal W$. Non-Muon parameters are omitted from the blockwise notation. If present, their gradients remain in $G_t^{(i)}$, receive the same clipping-and-noise mechanism, and are updated by the auxiliary optimizer. The norm $\lVert G_t^{(i)}\rVert_2$ treats all parameter blocks as one vector.

\vspace{-0.2cm}
\subsection{Differential privacy and private optimization}
\vspace{-0.2cm}

Two datasets are neighbors if they differ by the addition or
removal of one record. A randomized algorithm $\mathcal M$
is $(\epsilon,\delta)$-differentially private~\citep{dwork2014algorithmic}
if, for every neighboring pair $\mathcal D,\mathcal D'$ and
every measurable output set $\mathcal E$,
\begin{equation}
\Pr[\mathcal M(\mathcal D)\in\mathcal E]
\le e^\epsilon
\Pr[\mathcal M(\mathcal D')\in\mathcal E]+\delta,
\label{eq:dp-definition}
\end{equation}
where $\epsilon$ is the privacy budget and $\delta$ allows a small additive relaxation; smaller values give stronger privacy.

To achieve this in optimization, given a clipping radius $C>0$, flat clipping bounds each example's contribution by rescaling its complete gradient:
\begin{equation}
\clip_C(G_t^{(i)})=\lambda_t^{(i)}G_t^{(i)},
\qquad
\lambda_t^{(i)}=\min\left\{1,\frac{C}{\lVert G_t^{(i)}\rVert_2}\right\},
\label{eq:clip}
\end{equation}
with $\lambda_t^{(i)}=1$ when $G_t^{(i)}=0$. For matrix-level equations, $\clip_C(G_{t,W}^{(i)})$ denotes the $W$ block of $\clip_C(G_t^{(i)})$, with the clipping factor still determined by the full gradient norm. Letting $B$ be the (expected) lot size, the private release is
\begin{equation}
\widetilde G_t
=\frac{1}{B}\left(\sum_{i\in S_t}\clip_C(G_t^{(i)})
+\sigma_{\mathrm{DP}}C Z_t\right),
\qquad Z_t\sim\mathcal N(0,I),
\label{eq:private-release}
\end{equation}
with $\widetilde G_{t,W}$ denoting the matrix block of $\widetilde G_t$ for $W\in\mathcal W$. Clipping controls sensitivity, Gaussian noise provides privacy, and all subsequent optimizer operations are post-processing.

\vspace{-0.2cm}
\subsection{Muon and DP-Muon}
\vspace{-0.2cm}

Muon updates each hidden-layer matrix separately~\citep{jordan2024muon}. Fix $W\in\mathcal W$, and let $\widetilde G_{t,W}$ be its block of the private release in Eq.~\eqref{eq:private-release}. DP-Muon \citep{kim2026dpmuon} applies
\begin{equation}
\begin{aligned}
M_{t,W}&=\mu M_{t-1,W}+\widetilde G_{t,W}, &
N_{t,W}&=\widetilde G_{t,W}+\mu M_{t,W},\\
O_{t,W}&=\NS_J(N_{t,W}), &
W_{t+1}&=W_t-\eta O_{t,W}.
\end{aligned}
\label{eq:baseline-muon}
\end{equation}
Here $M_{t,W}$ is the outer-momentum buffer, $N_{t,W}$ is the Nesterov direction, and $O_{t,W}$ is the orthogonalized direction. The operator $\NS_J$ applies $J$ Newton--Schulz steps; Section~\ref{sec:muon} gives the exact iteration. We call $M_{t,W}$ \emph{outer momentum} to distinguish it from the per-example \emph{inner momentum} introduced next. Inner momentum acts before clipping, whereas outer momentum acts on the private release. Section~\ref{sec:method} gives the complete training procedure.

\vspace{-0.2cm}
\subsection{Metrics}
\label{sec:metrics}
\vspace{-0.2cm}
For nonzero matrices $X$ and $Y$, define the cosine error
\begin{equation}
d_{\mathrm{cos}}(X,Y)
=1-\frac{\inner{X}{Y}}{\fnorm{X}\fnorm{Y}}.
\label{eq:cosine-error}
\end{equation}
This is cosine distance after vectorizing the matrices. For a reduced singular value decomposition $X=U\Sigma V^\top$ over the positive singular values, define $\polar(X)=UV^\top$ and $\polar(0)=0$. Individual singular vectors are not unique within a repeated-singular-value subspace, but $UV^\top$ is invariant to the corresponding joint change of basis. It is the partial isometry mapping the right singular subspace to the left singular subspace. The polar error is
\begin{equation}
d_{\mathrm{polar}}(X,Y)=\fnorm{\polar(X)-\polar(Y)}.
\label{eq:polar-discrepancy}
\end{equation}
Cosine error compares all entries, whereas polar error compares this basis-invariant map used by Muon. We report both because two matrices can differ in entrywise cosine similarity while sharing a polar factor.

\vspace{-0.2cm}
\section{Geometric effects of gradient clipping}
\label{sec:clipping}
\vspace{-0.2cm}
Muon’s Newton–Schulz update preserves the input’s polar factor while transforming its singular values. This makes flat clipping, as used in standard DP-SGD, a poor fit for Muon. The reason is that different examples receive different clipping factors, clipping reweights each example's contribution. In other words, the straightforward \textit{rescale-then-average} mechanism that makes DP-SGD's privacy guarantee simple actually corrupts the geometric object that drives Muon's update.

We make this concrete in three steps. First, in \S\ref{sub:residual} we decompose the clipped batch average into a common-rescaling term, which would leave the polar factor unchanged, and a residual term $R$ that captures exactly the distortion caused by unequal per-example clipping factors. Second, in \S\ref{subsec:ns-does-not-fix} we show that Newton–Schulz orthogonalization, which computes the polar factor of a matrix, does not undo this distortion. Third, in \S\ref{subsec:example} we give a minimal two-example numerical example that makes the resulting change in update direction explicit and quantifiable.


\vspace{-0.2cm}
\subsection{The clipping residual} \label{sub:residual}
\vspace{-0.2cm}
Fix a step $t$ and a block $W\in\mathcal W$. Write $G_i$ for the matrix supplied by example $i$ for this block before clipping, and let $\lambda_i$ be the clipping factor computed from its complete query. For DP-Muon, $G_i=G_{t,W}^{(i)}$. We suppress $t$ and $W$ below.

For a batch $S_t$ of size $m$, define
\begin{equation}
G_{\mathrm{avg}}=\frac1m\sum_{i\in S_t}G_i,
\qquad
G_{\mathrm{clip}}=\frac1m\sum_{i\in S_t}\lambda_iG_i,
\qquad
\bar\lambda=\frac1m\sum_{i\in S_t}\lambda_i.
\label{eq:means}
\end{equation}
If every example has the same clipping factor, then $G_{\mathrm{clip}}=\bar\lambda G_{\mathrm{avg}}$, so clipping cannot change the singular vectors or polar factor of the mean. Unequal factors add the residual below.

Define the variation in clipping factors and matrix gradients by
\begin{equation}
\sigma_\lambda^2=\frac1m\sum_{i\in S_t}(\lambda_i-\bar\lambda)^2,
\qquad
\sigma_G^2=\frac1m\sum_{i\in S_t}\fnorm{G_i-G_{\mathrm{avg}}}^2.
\label{eq:dispersions}
\end{equation}

\begin{proposition}[Residual identity and bound]
\label{prop:residual}
The clipped mean can be written as
\begin{equation}
G_{\mathrm{clip}}=\bar\lambda G_{\mathrm{avg}}+R,
\qquad
R=\frac1m\sum_{i\in S_t}(\lambda_i-\bar\lambda)
(G_i-G_{\mathrm{avg}}),
\label{eq:residual}
\end{equation}
where the residual satisfies $\fnorm{R}\le\sigma_\lambda\sigma_G$.
\end{proposition}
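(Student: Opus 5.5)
The plan is to verify the identity by direct expansion and then obtain the bound from the triangle inequality followed by Cauchy--Schwarz over the batch index. Throughout, the only structure used is that $\bar\lambda$ and $G_{\mathrm{avg}}$ are empirical means over $S_t$, as in Eq.~\eqref{eq:means}, so centered deviations sum to zero.

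For the identity, expand the product in $R$:
\begin{equation*}
\frac1m\sum_{i\in S_t}(\lambda_i-\bar\lambda)(G_i-G_{\mathrm{avg}})
=\frac1m\sum_{i}\lambda_iG_i-G_{\mathrm{avg}}\frac1m\sum_i\lambda_i-\bar\lambda\frac1m\sum_iG_i+\bar\lambda G_{\mathrm{avg}}.
\end{equation*}
Using the definitions in Eq.~\eqref{eq:means}, the right-hand side equals $G_{\mathrm{clip}}-\bar\lambda G_{\mathrm{avg}}-\bar\lambda G_{\mathrm{avg}}+\bar\lambda G_{\mathrm{avg}}=G_{\mathrm{clip}}-\bar\lambda G_{\mathrm{avg}}$. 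Rearranging gives Eq.~\eqref{eq:residual}.

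For the bound, I apply the triangle inequality for $\fnorm{\cdot}$ and use homogeneity on each scalar--matrix product:
\begin{equation*}
\fnorm{R}\le\frac1m\sum_i|\lambda_i-\bar\lambda|\,\fnorm{G_i-G_{\mathrm{avg}}}.
\end{equation*}
Cauchy--Schwarz on the two length-$m$ real sequences $|\lambda_i-\bar\lambda|$ and $\fnorm{G_i-G_{\mathrm{avg}}}$, with the $1/m$ normalization split as $1/\sqrt m\cdot 1/\sqrt m$, bounds this by
\begin{equation*}
\Bigl(\frac1m\sum_i(\lambda_i-\bar\lambda)^2\Bigr)^{1/2}\Bigl(\frac1m\sum_i\fnorm{G_i-G_{\mathrm{avg}}}^2\Bigr)^{1/2}=\sigma_\lambda\sigma_G,
\end{equation*}
by Eq.~\eqref{eq:dispersions}.

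There is no real obstacle. The only point needing care is that $\lambda_i$ is computed from the full gradient, not the block $G_i$. The argument never relates $\lambda_i$ to $G_i$, so it holds blockwise for any scalars $\lambda_i$ and any matrices $G_i$.
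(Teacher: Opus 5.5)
Your proposal is correct and follows essentially the same route as the paper. The paper expands $R$ using $\sum_{i}(\lambda_i-\bar\lambda)=0$ rather than your full four-term expansion, which is equivalent. It then applies the triangle inequality and Cauchy--Schwarz over the batch index exactly as you do, and your remark that nothing ties $\lambda_i$ to $G_i$ correctly explains why the bound holds blockwise with full-query clipping factors.
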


The proof is in Appendix~\ref{app:residual-proof}.


The term $\bar\lambda G_{\mathrm{avg}}$ preserves the singular vectors and polar factor, so any clipping-induced change to them must arise from $R$. Not every component of $R$ changes the polar factor, however, and the bound in Proposition~\ref{prop:residual} is only a worst-case envelope; it need not predict the observed polar discrepancy tightly. The residual vanishes when the clipping factors are equal ($\sigma_\lambda=0$) or the per-example matrices are equal ($\sigma_G=0$). The same argument applies to per-matrix clipping using the block's own clipping factor. Section~\ref{sec:quality-cost} reports polar error, and Appendix~\ref{app:normalized-residual} reports the scale-normalized clipping residual for historical and current-only queries.

Appendix~\ref{app:clipping-example} gives a worked two-example calculation of the clipping factors, residual, and polar error.

\subsection{What Newton--Schulz preserves} \label{subsec:ns-does-not-fix}
\label{sec:muon}
Muon uses a small number of Newton--Schulz iterations, each a polynomial in $XX^\top$ applied to $X$:
\begin{equation}
X^{+}=aX+b(XX^\top)X+c(XX^\top)^2X,
\qquad
(a,b,c)=(3.4445,-4.7750,2.0315).
\label{eq:newton-schulz}
\end{equation}

Let $\NS_J(X)$ denote $J$ such steps, starting from $X/(\fnorm{X}+\varepsilon)$
with $\varepsilon\ge 0$.\footnote{These coefficients, from
\citet{jordan2024muon}, are chosen so that small singular values grow quickly;
the iterates approach an approximately orthogonal matrix rather than
converging exactly to the polar factor.} Observe that each step acts only on singular values: if
$X=U\Sigma V^\top$ in reduced SVD form, then
$X^+=U\,p(\Sigma)\,V^\top$, where
$
p(\sigma)=a\sigma+b\sigma^3+c\sigma^5$ acts entrywise on $\Sigma$.
The singular vectors are therefore unchanged, and the
polar factor is preserved as long as no singular value changes sign. Proposition~\ref{prop:ns} makes this precise.
\begin{proposition}[Newton--Schulz preserves the polar factor]
\label{prop:ns}
Suppose $c>0$ and $b^2<4ac$. Then, in exact arithmetic, for every nonzero
matrix $X$, every $\varepsilon\ge0$, and every finite $J\ge1$,
$\polar\bigl(\NS_J(X)\bigr)=\polar(X)$. Consequently, for nonzero $X$ and $Y$,
\begin{equation}
d_{\mathrm{polar}}\bigl(\NS_J(X),\NS_J(Y)\bigr)
= d_{\mathrm{polar}}(X,Y).
\label{eq:polar-error}
\end{equation}
Muon's coefficients satisfy both conditions: $4ac-b^2=5.189382>0$.
\end{proposition}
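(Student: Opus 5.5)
The plan is to show that every Newton--Schulz step acts on singular values by a scalar polynomial that keeps positive values positive. Throughout, write $p(\sigma)=a\sigma+b\sigma^3+c\sigma^5=\sigma\,q(\sigma^2)$ with $q(s)=a+bs+cs^2$.

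\textbf{Step 1: positivity of the step polynomial.} I first check that $q(s)>0$ for every real $s$. Since $c>0$ and $b^2<4ac$, the quadratic $q$ has negative discriminant and a positive leading coefficient. Note that $c>0$ together with $b^2<4ac$ forces $a>0$. Therefore $q$ never vanishes and is strictly positive on $\mathbb R$. Consequently $p(\sigma)>0$ for all $\sigma>0$.

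\textbf{Step 2: a single step preserves the polar factor.} Take a reduced SVD $X=U\Sigma V^\top$ over the positive singular values, with $U^\top U=V^\top V=I$. Then $XX^\top=U\Sigma^2U^\top$, and so $(XX^\top)^kX=U\Sigma^{2k+1}V^\top$. This gives $X^+=U\,p(\Sigma)V^\top$ with $p(\Sigma)$ diagonal and, by Step~1, strictly positive. Hence this expression is itself a reduced SVD of $X^+$ over its positive singular values. It follows that $X^+$ is nonzero and $\polar(X^+)=UV^\top=\polar(X)$.

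Two points need care here.
\begin{itemize}
\item The entries of $p(\Sigma)$ may be unordered or coincide, even where the original singular values were distinct. This is harmless: the paper defines $\polar$ via any reduced SVD, and $UV^\top$ is invariant to the choice of basis within repeated singular values. Reordering the diagonal together with the columns of $U$ and $V$ also leaves $UV^\top$ unchanged.
\item No singular value can collapse to zero. This is exactly where the hypothesis $b^2<4ac$ is used, and it is the main point to get right.
\end{itemize}

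\textbf{Step 3: normalization and induction.} The initial normalization is $X_0=X/(\fnorm{X}+\varepsilon)$, a positive scalar multiple of $X$ because $X\neq0$ and $\varepsilon\ge0$. So $X_0$ has the same $U,V$, and $\polar(X_0)=\polar(X)$. Induction on $J$ with Step~2 then gives, for every $j\le J$, that $X_j\neq0$ and $\polar(X_j)=\polar(X)$. In particular $\polar(\NS_J(X))=\polar(X)$. Equation~\eqref{eq:polar-error} follows immediately by substituting this identity for both $X$ and $Y$ into the definition \eqref{eq:polar-discrepancy}.

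\textbf{Step 4: Muon's coefficients.} Finally I verify the numerical claim: $c=2.0315>0$, and $4ac-b^2=4(3.4445)(2.0315)-4.7750^2=27.990007-22.800625=5.189382>0$.
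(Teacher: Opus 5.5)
Your proof is correct and follows essentially the same route as the paper's. Both show $p(\sigma)>0$ for $\sigma>0$ from $c>0$ and $b^2<4ac$: you use the negative discriminant of $q(s)=a+bs+cs^2$, while the paper completes the square. Both then note that normalization and each step replace $\Sigma$ by a positive diagonal matrix with $U,V$ fixed, and that joint reordering leaves $UV^\top$ unchanged; your explicit induction and arithmetic check $27.990007-22.800625=5.189382$ just spell out what the paper leaves implicit.
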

The proof is in Appendix~\ref{app:ns-proof}.


For a constant clipping factor across samples, $\lambda_i=\bar\lambda$, we have $R=0$ in Eq.~\eqref{eq:residual}. Then $G_{\mathrm{clip}}$ is a positive rescaling of $G_{\mathrm{avg}}$, so their polar factors, and hence their Muon updates, coincide. Any change in the polar factor due to non-constant clipping is therefore captured by the residual $R$. Note that Eq.~\eqref{eq:polar-error} concerns only the matrices entering Newton--Schulz. In DP-Muon~\citep{kim2026dpmuon}, Gaussian noise and outer momentum are applied first and may change the discrepancy. Even a common clipping rescaling can change the signal-to-noise ratio or the balance between the current gradient and the momentum buffer.





\vspace{-0.2cm}
\section{Inner momentum before clipping}
\label{sec:method}
\vspace{-0.2cm}
Section~\ref{sec:clipping} showed that unequal per-example clipping changes the clipped mean's polar factor only through the residual $R$, with $\|R\|\le\sigma_\lambda\sigma_G$ (Proposition~\ref{prop:residual}). To reduce this distortion, we average each example's gradients over the current and recent private model states \emph{before} clipping, while leaving the rest of DP-Muon unchanged. We call this \emph{inner momentum}, distinguishing it from Muon's outer momentum, which averages noisy batch gradients after clipping.

The section has three parts. \S\ref{sub:averaging} defines the per-example average. \S\ref{sub:envelope} is the central result: averaging shrinks the envelope of $R$ by a factor $\rho+(1-\rho)S_K$, where $\rho$ measures how correlated an example's gradient fluctuations are across states and $S_K$ is the sum of squared averaging weights. The factor is below one when $\rho<1$ and at least two weights are nonzero. \S\ref{sub:procedure} gives the resulting algorithm, its cost, and its privacy guarantee. The envelope result bounds $R$ and does not prove the realized $R$ is smaller, we measure the resulting polar error directly in Section~\ref{sec:experiments}.
\vspace{-0.2cm}
\subsection{Averaging one example across model states}
\label{sub:averaging}
\vspace{-0.2cm}
Let $K_t=\min\{t,K_{\max}\}$. For $i\in S_t$ and each Muon matrix block $W\in\mathcal W$, evaluate
\begin{equation}
G_{t-j,W}^{(i)}=\nabla_W\ell(\theta_{t-j};x_i),\qquad j=0,\ldots,K_t,
\label{eq:history-gradient}
\end{equation}
where $j=0$ is the current model and every term uses the same example $x_i$. With decay $\gamma\in[0,1]$ and normalized weights $\alpha_j=\gamma^j/\sum_{\ell=0}^{K_t}\gamma^\ell$, form
\begin{equation}
H_{t,W}^{(i)}=\sum_{j=0}^{K_t}\alpha_j\,G_{t-j,W}^{(i)}.
\label{eq:inner-momentum}
\end{equation}
If $K_t=0$ or $\gamma=0$, then $H_{t,W}^{(i)}=G_{t,W}^{(i)}$ and we recover DP-Muon.
For each example, the blocks $H_{t,W}^{(i)}$ are combined with the non-Muon blocks, which are averaged in the same way with their own history length and decay $(K_A,\gamma_A)$, and clipped as one query,
with a scalar factor $\lambda_i$ computed from all trainable coordinates. Proposition~\ref{prop:residual} then applies with $G_i=H_{t,W}^{(i)}$.

We normalize by $\sum_j\gamma^j$, unlike the unnormalized sum of \citet{xiao2023theory}, for two reasons. First, $H_{t,W}^{(i)}$ stays on the scale of a single gradient, so the clipping radius $C$ keeps its meaning. An unnormalized sum would inflate query norms by up to $\sum_j\gamma^j$ and force stronger clipping. Second, the weights sum to one, which is what the analysis in \S\ref{sub:envelope} uses.

\vspace{-0.2cm}
\subsection{Averaging shrinks the residual envelope}
\label{sub:envelope}
\vspace{-0.2cm}

Proposition~\ref{prop:residual} bounds $R$ by the product $\sigma_\lambda\sigma_G$, so the question is whether averaging lowers both factors. Let $g_j(z)=\nabla_\theta\ell(\theta_{t-j};z)\in\mathbb R^d$ be example $z$'s full-query gradient over all trainable coordinates, including Muon and non-Muon parameters. Define $\mu_{j,u}=\mathbb E\,g_j(z)$ and $\xi_{j,u}(z)=g_j(z)-\mu_{j,u}$. With the same weights for all coordinates, the averaged query is $u_i=\sum_j\alpha_jg_j(z_i)$. Let $S_K=\sum_j\alpha_j^2$, which is $1$ without averaging and $1/(K+1)$ for equal weights. We assume, for constants $\tau_u\ge0$ and $\rho\in[0,1]$, that $\mathbb E\|\xi_{j,u}\|_2^2\le\tau_u^2$ and $\mathbb E\langle\xi_{j,u},\xi_{\ell,u}\rangle\le\rho\tau_u^2$ for $j\neq\ell$ on the same example. The examples in a batch are independent. Set
\begin{equation}
s_{K,u}^2=\tau_u^2\bigl(\rho+(1-\rho)S_K\bigr).
\end{equation}
Here $\tau_u$ bounds the root-mean-square fluctuation of a full-query gradient, and $\rho$ bounds its cross-state covariance relative to $\tau_u^2$.


\begin{proposition}[Residual envelope under averaging]
\label{prop:residual-envelope}
Let $\lambda_i=\min\{1,C/\|u_i\|_2\}$ be the flat-clipping factor of example $i$'s averaged query $u_i$, and let $R_W$ be the residual of Proposition~\ref{prop:residual} for a block $W$ of the query. Then
\begin{equation}
\mathbb E\,\|R_W\|_F\;\le\;\frac{m-1}{m}\,\frac{s_{K,u}^2}{C}
\;=\;\frac{m-1}{m}\,\frac{\tau_u^2}{C}\,\bigl(\rho+(1-\rho)S_K\bigr).
\label{eq:residual-envelope}
\end{equation}
The current-only query ($K=0$) satisfies the same bound with $s_{0,u}=\tau_u$. Averaging therefore shrinks the envelope by the factor $\rho+(1-\rho)S_K$, which is below one whenever $\rho<1$ and at least two weights are nonzero.
\end{proposition}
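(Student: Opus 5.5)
Starting from Proposition~\ref{prop:residual} applied with $G_i$ equal to the $W$ block of $u_i$, we have $\fnorm{R_W}\le\sigma_\lambda\sigma_{G}$. The plan is to bound $\sigma_\lambda$ by the dispersion of the full queries, scaled by a Lipschitz constant of the clipping map. We then combine the two dispersions into a single second moment, and finally evaluate that moment using the variance and temporal-covariance assumptions.

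\textbf{Step 1: Lipschitz control of the clipping factors.} The map $v\mapsto\min\{1,C/\|v\|_2\}$ is $1/C$-Lipschitz in $v$, by the reverse triangle inequality applied to $\|v\|_2$. The key case is the one where both norms exceed $C$:
\[
\bigl|C/a-C/b\bigr|=C|a-b|/(ab)\le|a-b|/C
\]
when $a,b\ge C$. The mixed case follows because the function is monotone in $\|v\|_2$ and constant below $C$. Next I will use the pairwise form of the variance,
\[
\sigma_\lambda^2=\frac1{2m^2}\sum_{i,k}(\lambda_i-\lambda_k)^2.
\]
This gives $\sigma_\lambda^2\le\frac1{C^2}\cdot\frac1{2m^2}\sum_{i,k}\|u_i-u_k\|_2^2=\sigma_u^2/C^2$, where $\sigma_u^2=\frac1m\sum_i\|u_i-\bar u\|_2^2$ is the empirical dispersion of the full queries. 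The block $W$ is a coordinate restriction of $u$, so $\sigma_G\le\sigma_u$. Together these give $\fnorm{R_W}\le\sigma_u^2/C$ pointwise. This avoids handling the product of two dependent random factors; I expect this reduction to be the main obstacle. Getting a single squared dispersion, rather than a product whose expectation would need Cauchy--Schwarz, is what makes the bound linear in $s_{K,u}^2$.

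\textbf{Step 2: Expectation of the empirical dispersion.} Taking expectations gives $\mathbb E\fnorm{R_W}\le\mathbb E\sigma_u^2/C$. Write $u_i=\bar\mu+\eta_i$ with $\bar\mu=\sum_j\alpha_j\mu_{j,u}$ and $\eta_i=\sum_j\alpha_j\xi_{j,u}(z_i)$. The common mean $\bar\mu$ cancels in $u_i-\bar u$. Since the examples are independent with mean-zero $\eta_i$, the standard unbiased-variance identity gives
\[
\mathbb E\sigma_u^2=\frac{m-1}{m}\cdot\frac1m\sum_i\mathbb E\|\eta_i\|_2^2
\]
(if the $\eta_i$ are not identically distributed, the same identity holds with the average of their second moments). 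This produces the factor $\frac{m-1}{m}$.

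\textbf{Step 3: Bounding the second moment of an averaged query.} Expand
\[
\mathbb E\|\eta_i\|_2^2=\sum_j\alpha_j^2\,\mathbb E\|\xi_j\|^2+\sum_{j\ne\ell}\alpha_j\alpha_\ell\,\mathbb E\langle\xi_j,\xi_\ell\rangle.
\]
Bound the diagonal terms by $\tau_u^2$ and the cross terms by $\rho\tau_u^2$; the cross-term bound uses $\alpha_j\alpha_\ell\ge0$. Since $\sum_j\alpha_j=1$, we have $\sum_{j\ne\ell}\alpha_j\alpha_\ell=1-S_K$. The sum is therefore at most $\tau_u^2\bigl(S_K+\rho(1-S_K)\bigr)=s_{K,u}^2$, which yields Eq.~\eqref{eq:residual-envelope}.

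\textbf{Step 4: The $K=0$ case and the shrinkage factor.} For $K=0$ we have $S_K=1$, so $s_{0,u}=\tau_u$. The shrinkage factor is $\rho+(1-\rho)S_K$. When at least two weights are positive, $S_K<(\sum_j\alpha_j)^2=1$, so the factor is strictly less than one whenever $\rho<1$.

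Two subtleties need checking. First, the expectations are taken treating the past states $\theta_{t-j}$ as fixed, that is, conditionally on the history. Second, the "examples independent" assumption is meant as i.i.d. draws given the states. I will state both explicitly.
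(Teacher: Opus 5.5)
Your proof is correct and follows the paper's argument step for step. Both use the $1/C$-Lipschitz clipping map and the block-restriction inequality $\sigma_G\le\sigma_u$ to get the pointwise bound $\fnorm{R_W}\le\sigma_u^2/C$, then finish with the sample-variance identity and the nonnegative-weight moment expansion. The only cosmetic difference is how you reach $\sigma_\lambda\le\sigma_u/C$: you use the pairwise-difference form of the variance, whereas the paper bounds $\sigma_\lambda^2$ by the second moment of the $\lambda_i$ about $\lambda(\|\bar u\|_2)$, and both are equally direct.
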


\emph{Why it holds.} The proof has two steps, each answering one factor of Proposition~\ref{prop:residual}. \emph{(i) $\sigma_G$ shrinks.} The weights sum to one, so the population means $\mu_{j,u}$ cancel when one example's averaged gradient is compared with the batch mean. The remaining fluctuations combine with variance at most $\tau_u^2(\rho+(1-\rho)S_K)$.
 \emph{(ii) $\sigma_\lambda$ is controlled by the same dispersion.} The map $n\mapsto\min\{1,C/n\}$ is $1/C$-Lipschitz, and $|\,\|u_i\|-\|\bar u\|\,|\le\|u_i-\bar u\|$, so $\sigma_\lambda\le\sigma_u/C$, where $\sigma_u$ is the dispersion of the complete queries. A block is a subset of coordinates, so $\sigma_{G,W}\le\sigma_u$, and hence $\|R_W\|_F\le\sigma_\lambda\sigma_{G,W}\le\sigma_u^2/C$, which is quadratic in the dispersion.
 The full proof is in Appendix~\ref{app:polar}.

\emph{Separate histories.} Proposition~\ref{prop:residual-envelope} uses one set of weights for all coordinates. Suppose instead that each parameter group $b$ (for example, Muon and AdamW blocks) uses its own weights, with constants $\tau_b,\rho_b$ and squared-weight sum $S_{K,b}$. Set $s_b^2=\tau_b^2\bigl(\rho_b+(1-\rho_b)S_{K,b}\bigr)$ and $s_u^2=\sum_b s_b^2$. The same argument, combined with Cauchy--Schwarz, gives
\begin{equation}
\mathbb E\,\|R_W\|_F\le\frac{m-1}{m}\,\frac{s_u\,s_W}{C},
\label{eq:residual-envelope-groups}
\end{equation}
where $s_W$ is the constant of the group containing $W$. A group without history ($S_{K,b}=1$) contributes an unreduced term to $s_u$.

\emph{What it does and does not say.} It bounds $R$ from above and does not prove that the realized $R$ is smaller. No lower bound on the current-only residual is available in general. The reduction also needs $\rho<1$, meaning an example's gradient fluctuations at nearby states must not be perfectly correlated. This is plausible with dropout or other evaluation randomness and with parameter drift\rc{.}
Staleness does not enter Eq.~\eqref{eq:residual-envelope}: averaging over older states affects the residual only through fluctuations, and affects the bias of the averaged gradient through a drift term $D_K$ that grows with $K$ and $\gamma$. Appendix~\ref{app:polar} (Theorem~\ref{thm:polar}) shows how the two combine in a polar-error bound. This is the trade-off that $K_{\max}$ and $\gamma$ control.

\vspace{-0.2cm}
\subsection{Training procedure}
\label{sub:procedure}
\vspace{-0.2cm}

Algorithm~\ref{alg:dp-muon-im} gives the update for one Muon block $W\in\mathcal W$, where $\mathcal W$ is the set of Muon matrices. We drop the block subscript and apply the same steps to every block. The only change from DP-Muon is line~4, which replaces the current gradient $G_t^{(i)}$ by the averaged $H_t^{(i)}$ of Eq.~\eqref{eq:inner-momentum}. Clipping, noise, outer momentum, and Newton--Schulz are unchanged. Because Newton--Schulz preserves the polar factor (Proposition~\ref{prop:ns}), any correction must act before it, so we leave it unchanged.
\vspace{-0.3cm}
\begin{algorithm}[H]
\caption{DP-Muon with inner momentum (DP-Muon-IM).}
\label{alg:dp-muon-im}
\small
\begin{algorithmic}[1]
\REQUIRE Iterations $T$, learning rate $\eta$, momentum $\mu$, and NS steps $J$
\REQUIRE History $K_{\max},\gamma$ and privacy parameters $C,\sigma_{\mathrm{DP}}, q,B$
\STATE $M_{-1}\leftarrow0$.
\FOR{$t=0,\ldots,T-1$}
    \STATE Poisson-sample $S_t\subset\mathcal D$ with rate $q$; $K_t\leftarrow\min\{t,K_{\max}\}$.
    \STATE $\displaystyle H_t^{(i)}\leftarrow
    \frac{\sum_{j=0}^{K_t}\gamma^j\nabla_W\ell(\theta_{t-j};x_i)}
         {\sum_{j=0}^{K_t}\gamma^j},\qquad i\in S_t$.
    \STATE $\displaystyle \widetilde G_t\leftarrow
    \frac{1}{B}\left(\sum_{i\in S_t}\clip_C(H_t^{(i)})
    +\sigma_{\mathrm{DP}}C Z_t\right),\quad Z_t\sim\mathcal N(0,I_{\dim(W)})$.
    \STATE $M_t\leftarrow \mu M_{t-1}+\widetilde G_t$.
    \STATE $N_t\leftarrow \widetilde G_t+\mu M_t$.
    \STATE $O_t\leftarrow \NS_J(N_t)$.
    \STATE $W_{t+1}\leftarrow W_t-\eta O_t$.
\ENDFOR
\RETURN $\theta_T$
\end{algorithmic}
\end{algorithm}
\vspace{-0.3cm}

\emph{Cost.} The history holds at most $K_{\max}$ pre-update model states, and each sampled example needs up to $K_t+1$ gradient evaluations, so the per-step cost grows linearly in $K_{\max}$.

\emph{Privacy.} History averaging adds no private releases. To analyze privacy at each step, we hold the previous noisy releases fixed, which also fixes the current and stored model states. Each sampled example then contributes one query clipped at $C$, and Gaussian noise is added once to the batch sum. The baseline accounting applies whenever the sampling rule, clipping radius, noise calibration, release denominator, and number of releases are unchanged. Theorem~\ref{thm:dp-muon-im} makes this precise (see Appendix~\ref{app:privacy} for the proof).

\begin{theorem}[Privacy of DP-Muon-IM]
\label{thm:dp-muon-im}
Algorithm~\ref{alg:dp-muon-im} is $(\epsilon,\delta)$-DP under add/remove adjacency whenever a valid Poisson-sampled Gaussian accountant certifies this guarantee for $T$ updates with sampling rate $q$ and noise multiplier $\sigma_{\mathrm{DP}}$. The guarantee holds for any fixed public history length $K_{\max}$ and decay $\gamma\in[0,1]$.
\end{theorem}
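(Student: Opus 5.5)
The plan is to reduce Algorithm~\ref{alg:dp-muon-im} to an adaptive composition of $T$ Poisson-subsampled Gaussian mechanisms with exactly the parameters $(q,\sigma_{\mathrm{DP}},C)$ of baseline DP-SGD/DP-Muon. The accountant's guarantee can then be invoked unchanged, and post-processing handles everything downstream of the releases.

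\emph{Step 1: fix the transcript and identify the per-step mechanism.} Condition on the transcript of previous noisy releases $\widetilde G_0,\ldots,\widetilde G_{t-1}$ and on public randomness such as initialization and dropout seeds drawn independently of the data. Then every model state $\theta_0,\ldots,\theta_t$ is a deterministic function of this transcript, because each update in lines~6--9 (outer momentum, Nesterov combination, $\NS_J$, and the step) depends only on past releases. In particular, the stored history $\theta_{t-K_t},\ldots,\theta_t$ is fixed. Since $K_{\max}$ and $\gamma$ are public constants, the map
\[
x\mapsto \phi_t(x)=\clip_C\Bigl(\sum_{j=0}^{K_t}\alpha_j\nabla\ell(\theta_{t-j};x)\Bigr)
\]
is a fixed, data-independent function with $\|\phi_t(x)\|_2\le C$ for every $x$. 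This holds because clipping is applied to the single concatenated query over all coordinates, Muon and non-Muon blocks alike, with one scalar $\lambda_i$. The step-$t$ release is $\frac1B\bigl(\sum_{i\in S_t}\phi_t(x_i)+\sigma_{\mathrm{DP}}CZ_t\bigr)$ with $S_t$ Poisson-sampled at rate $q$. This is exactly the sampled Gaussian mechanism with sensitivity $C$ under add/remove adjacency, noise multiplier $\sigma_{\mathrm{DP}}$, and a public denominator $B$. Dividing by $B$ is post-processing.

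\emph{Step 2: adaptive composition and accountant.} The accountant is valid for adaptively chosen query functions: RDP or PRV accountants for the sampled Gaussian apply whenever each step's query is any function of prior outputs with norm at most $C$. So the joint distribution of $(\widetilde G_0,\ldots,\widetilde G_{T-1})$ satisfies the certified $(\epsilon,\delta)$ bound. The returned $\theta_T$, together with any intermediate quantities such as $M_t$, $N_t$, and $O_t$, is a deterministic function of these releases and the public randomness. The post-processing property of DP then gives the result. Independence from $K_{\max}$ and $\gamma$ follows because they enter only through the choice of $\phi_t$, which Step~1 showed is irrelevant to the accountant.

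\emph{Main obstacle.} The delicate point is verifying that history averaging does not create extra dependence on any record. The stored past states $\theta_{t-j}$ were themselves computed from data, including possibly $x_i$. They depend on the data only through past releases, however, which the composition already accounts for. The argument must also confirm that example $x_i$ enters step $t$ only through the single term $\phi_t(x_i)$, and only if $i\in S_t$. Past gradients of $x_i$ are recomputed at step $t$ from the stored states, not cached from earlier sampled steps. I would state this conditioning argument carefully, as a formal adaptive-composition lemma over the sequence of mechanisms $\mathcal M_t(\cdot\,;\widetilde G_{<t})$, and also note that the data-independent random seeds used in evaluating $\phi_t$ can be absorbed into the public randomness.
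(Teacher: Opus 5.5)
Your proposal is correct and follows essentially the same route as the paper's appendix argument. You condition on the preceding noisy releases so that the current and stored model states are fixed, and note that each sampled record contributes one clipped full-coordinate query of norm at most $C$, so each step is a Poisson-sampled Gaussian mechanism with add/remove sensitivity $C$; adaptive composition through the accountant and post-processing then handle momentum, Newton--Schulz, AdamW, and history updates. The only cosmetic difference is that you condition on data-independent evaluation randomness such as dropout seeds, whereas the paper treats it as fresh per-record randomness and couples it across the neighboring executions; both need this randomness, and the denominator $B$, to be fixed independently of the private data.
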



\vspace{-0.2cm}
\section{Experiments}
\label{sec:experiments}
\vspace{-0.2cm}
We fine-tune GPT-2 on two table-to-text generation tasks and compare
DP-Muon-IM with DP-Muon across privacy budgets. We also compare with published
Muon-based private optimizers, and study how the history length affects
generation quality, training cost, and polar error.

\textbf{Data and model.}
We fully fine-tune GPT-2-small on E2E~\citep{E2E} and DART~\citep{DART}.
This is a standard benchmark for private fine-tuning
\citep{li2022large,zhang2025disk} and the setting used to evaluate DP-Muon
\citep{kim2026dpmuon}. Training/validation/test splits contain
$42{,}061/4{,}672/4{,}693$ records on E2E and $62{,}659/6{,}980/12{,}552$ on
DART.

\textbf{Optimizers and training.}
Our baseline, DP-Muon, disables both inner histories ($K=0$). DP-Muon-IM adds separate Muon and AdamW histories, with lengths $K_M,K_A$ and decays $\gamma_M,\gamma_A$, extending Section~\ref{sec:method}. Each per-example query is clipped at $C=1$ before noise is added.

We train for ten epochs on one H100 80GB, with nominal logical/physical batches of $1024/32$. Muon uses momentum $0.95$, Nesterov updates, and five Newton--Schulz steps. Learning rates decay linearly without warm-up or weight decay. Hyperparameter details for our runs are provided in Appendix~\ref{app:experiment-details}.

\textbf{Privacy and selection.}
Our utility runs target $\epsilon\in\{1,2,4,8\}$ with $\delta=1/(2N)$, where $N$ is the number of training records. We calibrate noise using PRV accounting~\citep{PRV}. We select configurations by validation loss on seed $37$ and evaluate seeds $37$--$39$.

\textbf{Evaluation.}
We compute BLEU~\citep{BLEU} and multi-reference ROUGE-L~\citep{ROUGE}. Both scores are reported on a
$0$--$100$ scale; higher is better. E2E has $630$ test prompts
and $4{,}693$ references; DART has $5{,}097$ prompt groups
and $12{,}551$ nonempty references.

\vspace{-0.2cm}
\subsection{Comparison with existing methods}
\vspace{-0.2cm}

Table~\ref{tab:utility} compares Muon-based optimizers on E2E at target $\epsilon=8$. We take the DP-Muon and DP-Muon-BC scores from \citet{kim2026dpmuon}; DP-Muon-BC adds a correction for privacy-noise bias after Newton--Schulz. Our KF variants apply DiSK's simplified Kalman filter to the private release before the Muon and AdamW updates~\citep{zhang2025disk}. KF-DP-Muon disables inner history; KF-DP-Muon-IM adds history averaging. DP-Muon and DP-Muon-IM use per-metric maxima over seeds $37$--$39$.

\begin{table}[t]
\caption{E2E test results for GPT-2 fine-tuning at target $\epsilon=8$.}
\label{tab:utility}
\centering
\small
\setlength{\tabcolsep}{6pt}
\renewcommand{\arraystretch}{1.16}
\begin{tabularx}{0.8\linewidth}{@{}Xlrr@{}}
\toprule
\textbf{Method} & \textbf{Source} & \textbf{BLEU} $\uparrow$ & \textbf{ROUGE-L} $\uparrow$ \\
\midrule

DP-Muon & \citet{kim2026dpmuon} & $63.78$ & $66.96$\\
DP-Muon-BC & \citet{kim2026dpmuon} & $63.66$ & $66.71$\\
\midrule
KF-DP-Muon-IM & This work & $64.77$ & $67.37$\\

KF-DP-Muon & This work & $64.95$ & $67.36$\\
DP-Muon & This work & $64.79$ & $67.58$\\
\rowcolor{black!5} \textbf{DP-Muon-IM} & This work & $65.65$ & $67.80$\\
\bottomrule
\end{tabularx}
\end{table}

\vspace{-0.2cm}
\subsection{Utility across privacy budgets}
\vspace{-0.2cm}

DP-Muon-IM improves BLEU and ROUGE-L over DP-Muon in all $24$ seed-matched comparisons (two datasets, four privacy budgets, three seeds; Table~\ref{tab:im-e2e-dart-privacy}). Mean BLEU gains are $0.60$--$2.57$ points on E2E and $0.50$--$1.06$ on DART. Selected E2E configurations use $K_M=4$, $\gamma_M=\gamma_A=0.5$, and $K_A=3$ for $\epsilon\in\{1,2,4\}$ or $K_A=2$ for $\epsilon=8$. DART uses $(K_M,K_A)=(3,3)$ and $(\gamma_M,\gamma_A)=(0.7,0.5)$ throughout.

Appendix~\ref{app:group-im} reports a fixed-learning-rate ablation with Muon-only history, AdamW-only history, both, and neither.

\begin{table}[t]
\caption{Test generation after ten epochs: mean $\pm$ standard deviation over seeds $37$--$39$.}
\label{tab:im-e2e-dart-privacy}
\centering
\small
\setlength{\tabcolsep}{5pt}
\renewcommand{\arraystretch}{1.08}
\begin{tabular}{@{}lcrrrr@{}}
\toprule
& & \multicolumn{2}{c}{DP-Muon} & \multicolumn{2}{c}{DP-Muon-IM} \\
\cmidrule(lr){3-4}\cmidrule(l){5-6}
Task & Target $\epsilon$ & BLEU $\uparrow$ & ROUGE-L $\uparrow$ & BLEU $\uparrow$ & ROUGE-L $\uparrow$ \\
\midrule
E2E & $1$ & $57.64\pm1.97$ & $62.71\pm0.21$ & $60.21\pm2.65$ & $64.41\pm1.02$ \\
    & $2$ & $63.05\pm0.34$ & $65.89\pm0.62$ & $63.76\pm0.26$ & $66.43\pm0.60$ \\
    & $4$ & $64.10\pm0.29$ & $66.72\pm0.18$ & $64.83\pm0.57$ & $67.03\pm0.39$ \\

    & $8$ & $64.74\pm0.05$ & $67.31\pm0.25$ & $65.34\pm0.39$ & $67.60\pm0.17$ \\

\midrule
DART & $1$ & $32.34\pm0.59$ & $50.20\pm0.23$ & $33.19\pm0.42$ & $50.91\pm0.16$ \\
     & $2$ & $34.88\pm0.37$ & $52.27\pm0.12$ & $35.94\pm0.29$ & $53.09\pm0.13$ \\
     & $4$ & $36.48\pm0.35$ & $53.52\pm0.27$ & $37.16\pm0.33$ & $54.24\pm0.17$ \\
     & $8$ & $37.44\pm0.33$ & $54.35\pm0.30$ & $37.95\pm0.19$ & $54.79\pm0.12$ \\
\bottomrule
\end{tabular}
\end{table}

\vspace{-0.2cm}
\subsection{Effect of history length}
\label{sec:quality-cost}
\vspace{-0.2cm}

\textbf{Utility and cost.}
We vary $K_M=K_A=K\in\{0,\ldots,4\}$ on E2E at $\epsilon\approx4$ for ten epochs, with Muon/AdamW learning rates $0.002/0.0035$ and decays $\gamma_M=\gamma_A=0.5$ for $K>0$; $K=0$ is DP-Muon without history. One historical state provides most of the BLEU gain: $K=0,1,4$ yield validation BLEU scores of $66.97$, $67.76$, and $67.82$ in $36.9$, $59.0$, and $123.7$ minutes, respectively, with nonmonotonic gains across $K$ (Figure~\ref{fig:e2e-final-tradeoff}, left). Peak GPU allocation rises from $26.70$ GiB without history to $49.85$ GiB with history. Timings exclude setup, evaluation, and checkpoint I/O.

\begin{figure}[htbp]
\noindent
\begin{minipage}[t]{0.482\linewidth}
\vspace{-1cm}
\includegraphics[width=\linewidth]{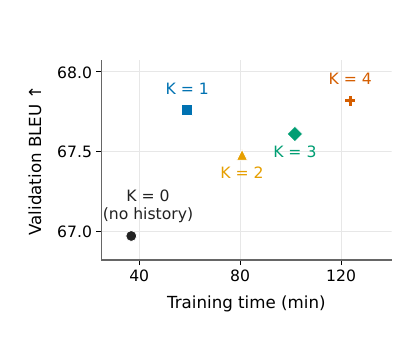}
\end{minipage}\hfill
\begin{minipage}[t]{0.482\linewidth}
\vspace{-1cm}
\includegraphics[width=\linewidth]{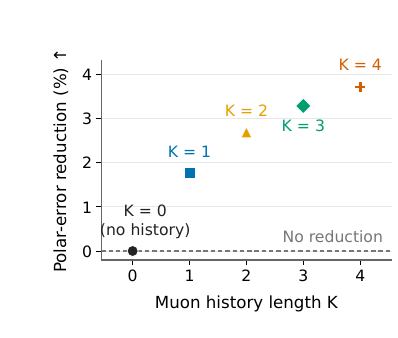}
\end{minipage}
 \vspace{-0.3cm}
\caption{Effect of history length in separate utility and geometry experiments. \textbf{Left:} validation BLEU versus training time after ten epochs. \textbf{Right:} pre-noise polar-error reduction before optimizer update step $10$, with Muon-only history.}
\label{fig:e2e-final-tradeoff}
 \vspace{-0.3cm}
\end{figure}

\textbf{Polar error.}
We measure polar error with Muon-only history, $\gamma=0.9$ (Figure~\ref{fig:e2e-final-tradeoff}, right). Each $K$ follows a separate trajectory. Within each trajectory, we compare IM and current-only clipped means at the same model state against a shared unclipped current-gradient reference, before noise and outer momentum. Error is the root-sum-square polar-factor distance across $48$ Muon matrices, using $512$ query records and $2{,}048$ disjoint held-out reference records. Just before the 10th optimizer update, IM reduces polar error by $1.76\%$ at $K=1$ and $3.71\%$ at $K=4$ relative to the current-only estimate.

\textbf{Clipping residual.}
With $K=4$ and $\gamma=0.9$, historical averaging lowers the normalized clipping residual in all $21$ seed--checkpoint comparisons. Mean paired reductions range from $25.0\%$ to $46.1\%$ across checkpoints (Appendix~\ref{app:normalized-residual}). These raw-query diagnostics precede noise and outer momentum.

\vspace{-0.2cm}
\section{Related Work}
\label{sec:related-work}
\vspace{-0.2cm}
\textbf{Clipping bias and pre-clipping averaging.}
Geometric analyses show that clipping can bias private optimization by changing the aggregate gradient~\citep{chen2020geometric}. \citet{xiao2023theory} relate this bias to per-example sampling noise, introduce inner--outer momentum, and report cosine similarity between unclipped and averaged-clipped gradients. Adaptive clipping and error feedback offer other treatments of clipping bias~\citep{bu2022automatic,zhang2024error}; DP-PMLF combines per-sample momentum with post-release filtering~\citep{xu2026pmlf}. We adapt pre-clipping averaging to Muon's matrix updates and study the resulting clipping residual (Section~\ref{sub:residual}).

\textbf{Muon under differential privacy.}
Muon applies momentum and a finite Newton--Schulz polynomial to hidden-layer matrix gradients~\citep{jordan2024muon}. DP-Muon studies private matrix optimization and Gaussian perturbations under this transform~\citep{kim2026dpmuon}; \citet{kimoh2026ns} analyze optimization with finitely many Newton--Schulz steps. Section~\ref{subsec:ns-does-not-fix} characterizes the polar geometry preserved by the iteration used here.


\vspace{-0.2cm}
\section{Discussion}
\label{sec:discussion}
\vspace{-0.2cm}
The E2E and DART results support averaging per-example gradients before clipping. The residual decomposition explains why this stage matters: unequal clipping can change the batch mean's polar factor, while the Newton--Schulz iteration used here preserves its immediate input's polar factor in exact arithmetic.

The geometry probes compare clipped IM and current-only means against the same current-gradient reference. With history length $K=4$ and decay $\gamma=0.9$, IM reduces polar error by $2.4$--$3.7\%$ across measured checkpoints (Appendix~\ref{app:reorientation-training}). These measurements precede Gaussian noise and outer momentum; they do not measure the final Muon update.

The E2E ablation favors a short history when compute is limited: $K=1$ came within $0.06$ validation BLEU points of $K=4$ in less than half the training time (Figure~\ref{fig:e2e-final-tradeoff}). This result favors a short history in the tested setting. The extra gradient evaluations and stored model states remain costs to address in larger-scale experiments.

\vspace{-0.2cm}
\section{Conclusion}
\label{sec:conclusion}
\vspace{-0.2cm}
We identify a covariance residual caused by unequal clipping and show that the Newton--Schulz iteration used here preserves its input's polar factor in exact arithmetic. DP-Muon-IM averages per-example gradients at current and recent private model states before clipping, retaining baseline per-run privacy accounting under the stated conditions. In GPT-2 fine-tuning, DP-Muon-IM achieves higher BLEU and ROUGE-L than DP-Muon on E2E and DART at target $\epsilon\in\{1,2,4,8\}$.

\subsection*{AI Use Statement}
We used large language models for language editing, LaTeX support, bibliographic checks, and mathematical consistency. We reviewed all AI-assisted work and take full responsibility for the paper.

\bibliographystyle{plainnat}
\bibliography{references_clean}

\appendix
\newpage

\section{Proofs}
\subsection{Residual identity and bound}
\label{app:residual-proof}

We use the means and dispersions defined in Eqs.~\eqref{eq:means} and~\eqref{eq:dispersions}.

\begin{proof}[Proof of Proposition~\ref{prop:residual}]
Expanding $R$ and using $\sum_{i\in S_t}(\lambda_i-\bar\lambda)=0$ gives
\[
R=\frac1m\sum_{i\in S_t}\lambda_iG_i
-\bar\lambda G_{\mathrm{avg}}
=G_{\mathrm{clip}}-\bar\lambda G_{\mathrm{avg}}.
\]
The triangle inequality and Cauchy--Schwarz yield
\[
\fnorm{R}
\le \frac1m\sum_{i\in S_t}|\lambda_i-\bar\lambda|\,
\fnorm{G_i-G_{\mathrm{avg}}}
\le \sigma_\lambda\sigma_G. \qedhere
\]
\end{proof}

\subsection{Newton--Schulz polar-factor preservation}
\label{app:ns-proof}

Recall $p(\sigma)=a\sigma+b\sigma^3+c\sigma^5$, the scalar polynomial associated with Eq.~\eqref{eq:newton-schulz}.

\begin{proof}[Proof of Proposition~\ref{prop:ns}]
Completing the square gives $p(\sigma)=\sigma\bigl[c(\sigma^2+b/2c)^2+(4ac-b^2)/4c\bigr]>0$ for all $\sigma>0$. Starting from a reduced SVD $X=U\Sigma V^\top$, the normalization and each step replace $\Sigma$ by another positive diagonal matrix while keeping $U$ and $V$ fixed. The new diagonal entries need not be in decreasing order, but sorting them only permutes the columns of $U$ and $V$ jointly, which leaves $UV^\top$ unchanged. Hence every iterate has a reduced SVD with polar factor $UV^\top$, and by the definition in Section~\ref{sec:metrics}, $\polar(\NS_J(X))=\polar(X)$.
\end{proof}

\section{Illustrative example of clipping distortion}
\label{app:clipping-example}
\label{subsec:example}
\label{sec:motivating-example}

Consider two per-example gradient matrices:
\[
G_1=\begin{bmatrix}10&0\\0&10\end{bmatrix},
\qquad
G_2=\begin{bmatrix}0&-1\\1&0\end{bmatrix}.
\]
With clipping radius $C=\sqrt2$, their clipping factors are $0.1$ and $1$. The means before and after clipping are
\[
G_{\mathrm{avg}}=\begin{bmatrix}5&-0.5\\0.5&5\end{bmatrix},
\qquad
G_{\mathrm{clip}}=\begin{bmatrix}0.5&-0.5\\0.5&0.5\end{bmatrix}.
\]
The mean clipping factor is $\bar\lambda=0.55$, so the residual is
\[
R=G_{\mathrm{clip}}-0.55G_{\mathrm{avg}}
=\begin{bmatrix}-2.25&-0.225\\0.225&-2.25\end{bmatrix}.
\]
Here $\sigma_\lambda=0.45$ and $\sigma_G=\sqrt{202}/2$, giving
\[
\fnorm{R}=0.225\sqrt{202}
=\sigma_\lambda\sigma_G\approx3.198.
\]
The bound is attained in this two-example construction, but it need not be tight in general. More importantly, $G_{\mathrm{clip}}$ is not a scalar multiple of $G_{\mathrm{avg}}$: clipping changes the aggregate polar factor even though it preserves the polar factor of either example.

In the metrics of Section~\ref{sec:metrics}, the resulting errors are
\[
d_{\mathrm{cos}}(G_{\mathrm{clip}},G_{\mathrm{avg}})
=1-\frac{11}{\sqrt{202}}\approx0.226,
\qquad
d_{\mathrm{polar}}(G_{\mathrm{clip}},G_{\mathrm{avg}})
=2\sqrt{1-\frac{11}{\sqrt{202}}}\approx0.951.
\]
In this two-dimensional special case, the polar factors happen to be planar rotations by $5.7^\circ$ and $45^\circ$. In higher dimensions, the polar factor instead describes a partial isometry between the right and left singular subspaces.

\section{Utility experiment details}
\label{app:experiment-details}

\textbf{Parameter groups and updates.}
Muon updates tensors with at least two dimensions, excluding token and position embeddings and the language-model head. AdamW updates those excluded parameters, vectors, and biases. For an $r\times c$ Muon matrix, the learning rate is the selected base rate times $0.2\sqrt{\max(r,c)}$~\citep{liu2025muon}. The five Newton--Schulz steps use coefficients $(3.4445,-4.7750,2.0315)$, with no extra output-level bias correction. AdamW retains standard moment bias correction and numerical $\varepsilon=10^{-8}$. Training dropout is active; historical evaluations use a separate stochastic stream, and evaluation disables dropout.

\textbf{Selected histories and learning rates.}
Table~\ref{tab:selected-rates} gives the base learning rates for the utility runs. At $\epsilon\in\{1,2,4\}$, E2E uses $(K_M,\gamma_M)=(4,0.5)$ and $(K_A,\gamma_A)=(3,0.5)$. DART uses $(K_M,\gamma_M)=(3,0.7)$ and $(K_A,\gamma_A)=(3,0.5)$ at every budget. DP-Muon (No-IM) disables both histories. Both methods in this table use AdamW $\beta_1=0.9$.
The E2E $\epsilon=8$ campaign fixes $(K_M,K_A)=(4,2)$ and $\gamma_M=\gamma_A=0.5$ for IM; its baseline disables both histories. Both use Muon/AdamW learning rates $0.002/0.0035$, retained from the earlier validation search without retuning.

At E2E $\epsilon=8$, both methods are trained afresh at $\delta=1/84122$. Across seeds $37$--$39$, IM gives $65.34\pm0.39$ BLEU and $67.60\pm0.17$ ROUGE-L, versus $64.74\pm0.05$ and $67.31\pm0.25$ for the baseline. Table~\ref{tab:utility} reports per-metric maxima: $65.65/67.80$ for IM and $64.79/67.58$ for the baseline, all from seed $39$. Seed $37$ informed the earlier search; seeds $38,39$ retain the previously used confirmation identities.

\begin{table}[htbp]
\caption{Validation-selected base learning rates for DP-Muon (No-IM) and DP-Muon-IM; each pair is (Muon, AdamW). Both methods retain the same standard optimizer moments.}
\label{tab:selected-rates}
\centering
\small
\begin{tabular}{@{}lccc@{}}
\toprule
Task & Target $\epsilon$ & DP-Muon (No-IM) & DP-Muon-IM \\
\midrule
E2E & $1$ & $(0.0016,0.0028)$ & $(0.0016,0.0035)$ \\
    & $2$ & $(0.0016,0.0035)$ & $(0.0016,0.0035)$ \\
    & $4$ & $(0.0016,0.0035)$ & $(0.0020,0.0035)$ \\
    & $8$ & $(0.0020,0.0035)$ & $(0.0020,0.0035)$ \\
\midrule
DART & $1$ & $(0.00105,0.00168)$ & $(0.00105,0.00168)$ \\
     & $2$ & $(0.0013125,0.00168)$ & $(0.0013125,0.0021)$ \\
     & $4$ & $(0.0013125,0.0021)$ & $(0.0013125,0.0021)$ \\
     & $8$ & $(0.0013125,0.0021)$ & $(0.0013125,0.002688)$ \\
\bottomrule
\end{tabular}
\end{table}

\textbf{Private release.}
E2E uses Poisson sampling with $q=1/41$ for $410$ releases; DART uses $q=1/61$ for $610$ releases. The implementation uses $\lfloor B\rfloor$ rather than $B$ in Eq.~\eqref{eq:private-release}, giving release denominators $1025$ and $1027$, respectively. These differ from the nominal logical batch size $1024$. Clipping uses the full concatenated per-example query, and each logical update makes one Gaussian release, not one release per microbatch. Table~\ref{tab:release-calibration} lists the verified per-run calibration.

\begin{table}[htbp]
\caption{Per-run PRV calibration for our utility runs, with $\delta=1/(2N)$ at every budget. Achieved $\epsilon$ is rounded.}
\label{tab:release-calibration}
\centering
\small
\begin{tabular}{@{}lcrrc@{}}
\toprule
Task & Target $\epsilon$ & Noise multiplier $\sigma_{\mathrm{DP}}$ & Achieved $\epsilon$ & $\delta$ \\
\midrule
E2E & $1$ & $2.05078125$ & $0.996290$ & $1/84122$ \\
    & $2$ & $1.2646484375$ & $1.999911$ & $1/84122$ \\
    & $4$ & $0.889892578125$ & $3.996703$ & $1/84122$ \\
    & $8$ & $0.67626953125$ & $7.998976$ & $1/84122$ \\
\midrule
DART & $1$ & $1.767578125$ & $0.993732$ & $1/125318$ \\
     & $2$ & $1.123046875$ & $1.998849$ & $1/125318$ \\
     & $4$ & $0.8197021484375$ & $3.993658$ & $1/125318$ \\
     & $8$ & $0.63720703125$ & $7.995195$ & $1/125318$ \\
\bottomrule
\end{tabular}
\end{table}

\textbf{DiSK settings.}
Each DiSK closure forms its temporal query. DiSK combines the base and extrapolated queries before clipping and noise, then filters the private release before Muon/AdamW. The filter uses $\kappa$ times the previous state plus $(1-\kappa)$ times the new input.

The single-seed KF rows in Table~\ref{tab:utility} form a separate joint ablation using learning rates $0.002/0.0021$, $\kappa=0.5$, and DiSK extrapolation coefficient $0.8$. The IM row retains the same histories. The IM setting was selected by validation loss from $30$ new trials and one reused anchor. Both KF studies use $\sigma_{\mathrm{DP}}=0.67626953125$, $\delta=1/84122$, and achieved $\epsilon\approx7.998976$. DiSK's extrapolation coefficient is distinct from either history decay.

\subsection{Optimizer-group ablation}
\label{app:group-im}

We switch history on or off separately for Muon and AdamW, fixing their base learning rates at $0.002$ and $0.0035$. Enabled histories use $(K_M,\gamma_M)=(4,0.5)$ and $(K_A,\gamma_A)=(3,0.5)$. All four arms retain Muon momentum $0.95$, AdamW $\beta_1=0.9$ and the E2E $\epsilon=4$ privacy calibration in Table~\ref{tab:release-calibration}. 

\begin{table}[htbp]
\caption{Optimizer-group ablation on E2E at target $\epsilon=4$ and $\delta=1/84122$. NLL is measured in nats per target token.}
\label{tab:group-im-seed38}
\centering
\small
\setlength{\tabcolsep}{6pt}
\begin{tabular}{@{}lrrr@{}}
\toprule
History enabled & Test NLL $\downarrow$ & BLEU $\uparrow$ & ROUGE-L $\uparrow$ \\
\midrule
Neither & $1.2842$ & $63.66$ & $66.90$ \\
Muon only & $1.2779$ & $63.76$ & $66.95$ \\
AdamW only & $1.2686$ & $64.37$ & $67.12$ \\
Both & $1.2586$ & $65.30$ & $67.47$ \\
\bottomrule
\end{tabular}
\end{table}

\section{Geometric error across training}
\label{app:reorientation-training}

The corrected diagnostic uses consistent per-record mean target-token loss for current and historical gradients. It compares IM with a current-only query at the same state of each IM-trained trajectory, before noise and outer momentum. Both are compared with the shared finite raw-current reference described in Section~\ref{sec:quality-cost}. This is not a comparison of independently trained IM and baseline models.

Runs use E2E, GPT-2, batch size $8$, learning rate $5\times10^{-5}$, Muon $(K,\gamma)=(4,0.9)$, and no AdamW history. Full-query clipping uses $C=1$. The three training seeds are $42,43,44$; probes occur before updates $10,50,100,250,500,1000,2000$. Thus index $10$ means nine completed updates. Means accumulate in FP32 and polar factors use CPU FP64 SVD.

\begin{figure}[H]
    \centering
    \includegraphics[width=0.58\linewidth]{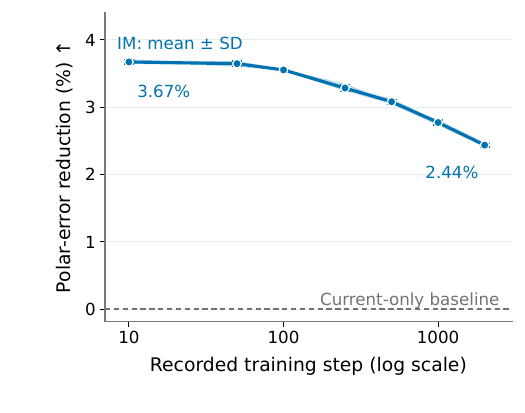}
    \caption{Corrected three-seed polar diagnostic at $(K,\gamma)=(4,0.9)$. Relative reduction is computed within each seed as $100(1-E_{\mathrm{IM}}/E_{\mathrm{current}})$, then averaged. Faint lines show individual seeds; bars and shading are pointwise sample SD, not confidence intervals.}
    \label{fig:reorientation-training}
\end{figure}

Both cosine and polar error are lower with IM in all $21$ seed--checkpoint pairs. Mean polar-error reduction decreases from $3.67\%$ before update $10$ to $2.44\%$ before update $2000$ (Figure~\ref{fig:reorientation-training}); mean cosine-error reductions range from $3.20\%$ to $5.98\%$. These distances are relative to a finite reference, not a population gradient or the noisy Newton--Schulz update.

The one-seed $K$ sweep in Figure~\ref{fig:e2e-final-tradeoff} uses the same raw-query diagnostic design at index $10$. It pairs IM and current-only queries within each trajectory; the $K=4$ point reuses the corrected seed-$42$ run above. The $K=0$--$3$ points are completed early probes, not final-training results. The two panels of that figure therefore differ in history groups, normalization, seed, batch size, and training horizon.

\subsection{Normalized clipping residual}
\label{app:normalized-residual}

We measure the clipping residual on the same $m=512$ fixed held-out query records at each checkpoint. Let $H_W^{(i)}$ be record $i$'s query for Muon block $W$: a history average for IM and a single current-state evaluation for the control. Write $G_{\mathrm{avg},W}=m^{-1}\sum_i H_W^{(i)}$ and $G_{\mathrm{clip},W}=m^{-1}\sum_i\lambda_i H_W^{(i)}$. Clipping factors come from the complete raw query at $C=1$. We report
\begin{equation}
R_W=G_{\mathrm{clip},W}-\bar\lambda G_{\mathrm{avg},W},
\qquad
\eta_R=
\frac{\bigl(\sum_{W\in\mathcal W}\fnorm{R_W}^2\bigr)^{1/2}}
{\bar\lambda\bigl(\sum_{W\in\mathcal W}\fnorm{G_{\mathrm{avg},W}}^2\bigr)^{1/2}}.
\label{eq:normalized-residual-diagnostic}
\end{equation}
The norms aggregate the same $48$ Muon matrices. Each arm uses its own mean clipping factor $\bar\lambda$ and unclipped query mean in the denominator. These measurements precede Gaussian noise and outer momentum.

\begin{figure}[H]
    \centering
    \includegraphics[width=0.9\linewidth]{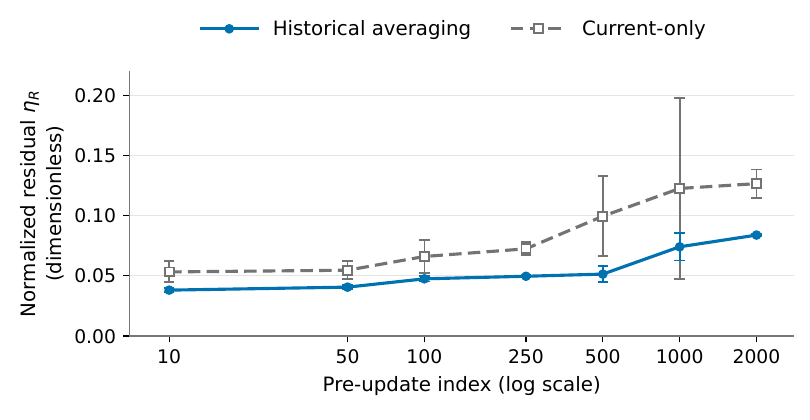}
    \caption{Normalized clipping residual $\eta_R$ on E2E for historical averaging ($K=4$, $\gamma=0.9$) and a single current-state evaluation at the same model states. Points show means over training seeds $42$--$44$; bars show sample standard deviation, not confidence intervals. All seven pre-update checkpoints are shown. Clipping uses the complete raw query at $C=1$; the residual metric covers Muon coordinates.}
    \label{fig:normalized-residual}
\end{figure}

Historical averaging gives a smaller $\eta_R$ in all $21$ seed--checkpoint pairs (Figure~\ref{fig:normalized-residual}). We compute the percentage reduction within each seed as $100(1-\eta_{R,\mathrm{IM}}/\eta_{R,\mathrm{current}})$, then average over seeds. These means range from $24.99\%$ to $46.10\%$ across checkpoints.

\section{Privacy and implementation conditions}
\label{app:privacy}

Fix a public initialization and public values of the sampling rate $q\in(0,1]$, clipping radius $C>0$, noise multiplier $\sigma_{\mathrm{DP}}>0$, release denominator $b>0$, and update count $T$. Current and stored model states and optimizer states must be computed only from preceding noisy releases and public settings. At each update, sample every record independently with probability $q$, independently of previous sampling draws. Sampling decisions remain internal. The accountant must bound adaptive composition under add/remove adjacency.

\textbf{Complete-query mechanism.}
Write $\mathcal T_{t-1}=(\widetilde G_0,\ldots,\widetilde G_{t-1})$ for the preceding noisy releases, with $\mathcal T_{-1}$ empty. Given these releases, let $u_t(x;\mathcal T_{t-1},\omega)\in\mathbb R^d$ be record $x$'s query over all trainable coordinates. Here $\omega$ is fresh data-independent randomness, drawn independently for each record and update and independently of sampling, Gaussian noise, and preceding releases. Historical evaluations within one record's query may share randomness. The query may combine different Muon and AdamW histories and apply per-record preprocessing. It must not use unaccounted private statistics from other records. Define
\begin{equation}
v_t(x;\mathcal T_{t-1},\omega)
=\clip_C\bigl(u_t(x;\mathcal T_{t-1},\omega)\bigr),
\qquad \|v_t(x;\mathcal T_{t-1},\omega)\|_2\le C.
\label{eq:privacy-record-query}
\end{equation}
The complete release at step $t$ is
\begin{equation}
\widetilde G_t=\frac1b\left(
\sum_{x\in S_t}v_t(x;\mathcal T_{t-1},\omega_{t,x})
+\sigma_{\mathrm{DP}}C Z_t\right),
\qquad Z_t\sim\mathcal N(0,I_d),
\label{eq:privacy-complete-release}
\end{equation}
where $b>0$ is fixed and public. Gaussian draws are independent across updates and independent of sampling and query randomness. Algorithm~\ref{alg:dp-muon-im} displays one Muon block of this release with $b=B$; the utility implementation uses $b=\lfloor B\rfloor$. Neither choice changes the privacy bound when the denominator is held fixed across neighboring datasets. Independent Gaussian draws for individual coordinates or blocks implement the same joint release; they are not separately clipped mechanisms.

\begin{proof}[Privacy argument]
Consider neighboring datasets $\mathcal D$ and $\mathcal D'=\mathcal D\cup\{x_\star\}$. Fix a common value of $\mathcal T_{t-1}$. With the public initialization and settings fixed, the current model and stored histories are the same in both conditional executions. This does not assert that their unconditional training trajectories coincide.

For every common record, the conditional query law in Eq.~\eqref{eq:privacy-record-query} is the same on both datasets. Couple the sampling and local randomness of these common records. The additional record is selected independently with probability $q$ and contributes a vector of norm at most $C$. Thus the conditional update is a Poisson-sampled Gaussian mechanism with noise multiplier $\sigma_{\mathrm{DP}}$: the add/remove sensitivity of the unsampled sum is at most $C$, irrespective of how many historical gradients were used within each query. Division by public $b$ is post-processing.

This per-step privacy bound holds uniformly over preceding transcripts. Adaptive composition gives the complete transcript $(\widetilde G_0,\ldots,\widetilde G_{T-1})$ the $(\epsilon,\delta)$ guarantee certified by the accountant~\citep{abadi2016deep,PRV}. Updating the history, applying Muon momentum and Newton--Schulz, and applying AdamW are post-processing of these releases. The final model and any checkpoints computed solely from the transcript inherit the same guarantee~\citep{dwork2014algorithmic}.
\end{proof}

\textbf{Calibration and output scope.}
The privacy analysis holds calibration fixed across neighboring datasets. Any reference dataset size used to choose $q$, $T$, $\delta$, or $b$ must therefore be a public design value, not recomputed from the private dataset's cardinality. The guarantee uses add/remove adjacency; replace-one adjacency has an unsampled sensitivity bound of $2C$ and requires the corresponding accounting.

This is a per-run guarantee for fixed public settings, not a privacy bound for the full experimental search. Private validation, hyperparameter selection, additional training runs, or a privately trained initialization require separate accounting. Releasing raw gradients, sampling decisions, or additional data-dependent diagnostics is not covered by this guarantee. In particular, the geometric probes' diagnostic releases and non-Poisson training protocol are not covered by the utility runs' reported budgets. This privacy argument requires no smoothness, gradient-decorrelation, or polar-rank assumptions.

\section{Averaging and polar error}
\label{app:polar}

Proposition~\ref{prop:residual} in Section~\ref{sec:clipping} separates the clipped mean into a common rescaling and a residual caused by unequal clipping. Here we use this decomposition to bound polar error relative to the current population gradient, accounting for sampling error and the use of historical gradients. The analysis concerns the clipped mean before Gaussian noise and outer momentum, not the final Muon update.

\textbf{Setup.}
Fix model states $\theta_0,\ldots,\theta_K$ and one Muon block $W\in\mathbb R^{r\times c}$, where $\theta_0$ is the current state. For an example $z$, let $G_j(z)$ be its raw gradient for this block at state $j$, and define $\mu_j=\mathbb E G_j(z)$ and $\xi_j(z)=G_j(z)-\mu_j$. Expectations include any local evaluation randomness, with the same conditional law given $z$ at every state. Draw $m\ge2$ examples independently and reuse each example across states. Each example and all its evaluation draws form one independent, identically distributed tuple; the states and weights are fixed independently of these tuples.

With fixed weights $\alpha_j\ge0$, $\sum_{j=0}^{K}\alpha_j=1$, shared across the block's coordinates, set
\begin{equation}
H^{(i)}=\sum_{j=0}^{K}\alpha_jG_j(z_i),
\qquad
N=\frac1m\sum_{i=1}^{m}\lambda_i H^{(i)},
\qquad
S_K=\sum_{j=0}^{K}\alpha_j^2.
\label{eq:polar-history-setup}
\end{equation}
As in Section~\ref{sec:clipping}, $\lambda_i>0$ is computed by flat clipping of example $i$'s complete raw query, not by clipping this block separately. Write $\bar\lambda=m^{-1}\sum_i\lambda_i$ and $\sigma_\lambda^2=m^{-1}\sum_i(\lambda_i-\bar\lambda)^2$. The current-only query has $K=0$ and $S_0=1$; equal weights give $S_K=1/(K+1)$.

We assume, for fixed constants $\tau_W\ge0$, $\rho\in[0,1]$, and $c_\lambda\ge0$,
\begin{enumerate}
    \item $\mathbb E\fnorm{\xi_j}^2\le\tau_W^2$ for every $j$;
    \label{ass:polar-variance}
    \item $\mathbb E\inner{\xi_j}{\xi_\ell}\le\rho\tau_W^2$ for $j\ne\ell$, evaluated on the same example;
    \label{ass:polar-correlation}
    \item $\sigma_\lambda\le c_\lambda\bar\lambda$ almost surely.
    \label{ass:polar-clipping}
\end{enumerate}
Here $\tau_W$ bounds fluctuations of block $W$, whereas $\tau_u$ in Section~\ref{sub:envelope} bounds fluctuations of the full query. The blockwise covariance bound $\rho$ here need not equal the full-query value used there. Assumption~(\ref{ass:polar-variance}) is the blockwise version of Assumption~3.1 in \citet{xiao2023theory}. Assumption~(\ref{ass:polar-correlation}) allows temporal correlation: independent centered fluctuations satisfy $\rho=0$, but independence is not required. The choice $\rho=1$ follows from (\ref{ass:polar-variance}) by Cauchy--Schwarz and does not require identical fluctuations. Assumption~(\ref{ass:polar-clipping}) and the full-rank reference condition below are additional assumptions, not results of \citet{xiao2023theory}.

\textbf{Polar perturbation.}
For square nonsingular inputs, the following inequality follows from classical polar-factor perturbation bounds~\citep{li1995polar}. The proof below also covers rectangular matrices and rank-deficient $X$ using the partial-polar convention in Section~\ref{sec:metrics}.

\begin{lemma}[Polar perturbation with a full-rank reference]
\label{lem:polar-lipschitz}
Let $T,X\in\mathbb R^{r\times c}$ with $\operatorname{rank}(T)=\min(r,c)$. Then
\begin{equation}
d_{\mathrm{polar}}(X,T)
\le\frac{2\fnorm{X-T}}{\sigma_{\min}(T)}.
\label{eq:polar-reference-bound}
\end{equation}
Here $\sigma_{\min}(T)>0$ is the smallest of the $\min(r,c)$ singular values of $T$.
\end{lemma}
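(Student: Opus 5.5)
We reduce to $r\ge c$ by transposing both $X$ and $T$. Transposition maps $\polar(X)$ to $\polar(X)^\top$ and preserves all Frobenius norms and singular values. With $r\ge c$, the hypothesis $\operatorname{rank}(T)=c$ means $T$ has full column rank. So $T=QP$ with $P=(T^\top T)^{1/2}\succeq\sigma_{\min}(T)I_c$ and $Q=\polar(T)=TP^{-1}$ an isometry ($Q^\top Q=I_c$). For $X$ we use the partial-polar convention of Section~\ref{sec:metrics}: $W=\polar(X)$ is a partial isometry, $H=(X^\top X)^{1/2}\succeq0$, $X=WH$, $W^\top X=H$, and $W^\top W=\Pi$, the projector onto the row space of $X$. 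We do not assume $H$ is invertible. Write $E=X-T$. If $X=0$, the bound reads $\sqrt{c}\le 2\fnorm{T}/\sigma_{\min}(T)$, which holds trivially, so assume $X\neq0$.

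The core step is a Sylvester-type identity in the spirit of the classical square-case argument~\citep{li1995polar}. Set $M=I_c-Q^\top W$. Then
\[
Q^\top E=Q^\top WH-P=H-MH-P,
\qquad
(W^\top E)^\top=H-PQ^\top W=H-P+PM.
\]
Subtracting gives
\[
PM+MH=(W^\top E)^\top-Q^\top E.
\]
Since $\|W\|_2,\|Q\|_2\le1$, the right side has Frobenius norm at most $2\fnorm{E}$. Now $P\succeq\sigma_{\min}(T)I$ and $H\succeq0$ are symmetric. Hence the linear operator $M\mapsto PM+MH$ on $\mathbb R^{c\times c}$ is self-adjoint in the Frobenius inner product, with eigenvalues $p_i+h_j\ge\sigma_{\min}(T)$. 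This yields
\[
\fnorm{I-Q^\top W}\le\frac{2\fnorm{E}}{\sigma_{\min}(T)}.
\]
The argument never inverts $H$, which is why it survives rank deficiency of $X$.

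The remaining step, and the one I expect to be the main obstacle, is converting this bound on $Q^\top(Q-W)=M$ into a bound on $\fnorm{W-Q}$ itself. We decompose
\[
\fnorm{W-Q}^2=\fnorm{M}^2+\fnorm{(I-QQ^\top)W}^2 .
\]
When $r=c$ and $X$ has full rank, the second term vanishes and we are done. In general we must also control the component of $W$ outside the range of $Q$, together with the deficit $\operatorname{tr}(I-\Pi)$ coming from a rank-deficient $X$. Both appear through the identity
\[
\fnorm{W-Q}^2=\operatorname{rank}(X)+c-2\operatorname{tr}(Q^\top W).
\]
My first attempt is to run the same symmetric trick on an enlarged square problem. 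Complete $Q$ to an orthogonal $[Q,Q_\perp]$ and treat $T$ and $X$ as $r\times r$ matrices by padding $P$ with a zero block. Then redo the Sylvester identity for the full $r\times r$ block $[Q,Q_\perp]^\top(W-Q)$, checking that the off-diagonal blocks still pair $P$ with $H$ so that the eigenvalue lower bound $\sigma_{\min}(T)$ persists. This should also account for the $\Pi\ne I$ defect.

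If that fails, the fallback is to bound $(I-QQ^\top)W$ directly using $(I-QQ^\top)WH=(I-QQ^\top)E$, restricted to the span of the singular vectors of $H$. The troublesome small singular values of $H$ would be absorbed by comparing $\sigma_i(X)$ with $\sigma_i(T)$ via Weyl's inequality. This route may only give a slightly worse constant, which would then have to be tightened back to $2$.
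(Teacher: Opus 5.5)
Your Sylvester identity $PM+MH=(W^\top E)^\top-Q^\top E$ is correct. The bound $\fnorm{I-Q^\top W}\le 2\fnorm{E}/\sigma_{\min}(T)$ follows from it. This already settles the whole square case $r=c$, including rank-deficient $X$, because then $QQ^\top=I$ and $\fnorm{W-Q}=\fnorm{M}$.

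For $r>c$, however, there is a genuine gap, and neither repair you sketch closes it. The term you discard, $(I-QQ^\top)W$, can be the dominant, first-order part of the error. For $T=(s,0)^\top$ and $X=(s,\epsilon)^\top$ you get $M=1-s/\sqrt{s^2+\epsilon^2}\approx\epsilon^2/(2s^2)$, while $\|(I-QQ^\top)W\|_2\approx\epsilon/s$. Your bound on $\fnorm{M}$ already spends the whole budget $2\fnorm{E}/s$, so any separate estimate of the complement term pushes the constant above $2$.

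The padding idea fails because $Q_\perp^\top T=0$. The only relation governing that block is therefore $Q_\perp^\top WH=Q_\perp^\top E$, and no $P$ acts on it. In a padded Sylvester operator this block is paired with the zero block of the padded $P$ and with $H$, so its eigenvalues are $h_j$, which can vanish. The fallback inverts $H$ on its support. That gives a bound of order $\fnorm{E}$ divided by the smallest positive singular value of $X$. This is unbounded as $X$ approaches rank deficiency, which is exactly a regime the lemma must still cover, so it cannot be tightened back to constant $2$.

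The missing idea is to take the trace of your identity instead of its Frobenius norm. Write $s=\sigma_{\min}(T)$. The trace of the right side is $\inner{E}{W-Q}$. Put $A=\tfrac12(M+M^\top)$, which is positive semidefinite because $\|Q^\top W\|_2\le1$. Then:
\begin{itemize}
\item $\operatorname{tr}(PM)=\operatorname{tr}(PA)\ge s\operatorname{tr}(A)$;
\item $\operatorname{tr}(MH)=\operatorname{tr}(AH)\ge0$;
\item your own identity gives $\fnorm{W-Q}^2=\operatorname{rank}(X)+c-2\operatorname{tr}(Q^\top W)\le 2\operatorname{tr}(A)$, which absorbs both the complement component and the rank deficit at once.
\end{itemize}
Hence $\tfrac{s}{2}\fnorm{W-Q}^2\le\inner{E}{W-Q}\le\fnorm{E}\fnorm{W-Q}$, which is the claim (the case $W=Q$ is trivial).

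This is precisely the paper's proof, with $U$ in place of your $W$. The paper bounds $\fnorm{Q-U}^2\le2\operatorname{tr}(A)$ and identifies $\operatorname{tr}(PA)=\inner{T}{Q-U}$. It obtains your $\operatorname{tr}(MH)\ge0$ in the form $\inner{X}{Q-U}\le0$, from the fact that $\polar(X)$ maximizes $\inner{X}{V}$ over $\|V\|_{\mathrm{op}}\le1$.
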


\begin{proof}
First suppose $r\ge c$. Let $Q=\polar(T)$, $U=\polar(X)$, and write $T=QP$ with $P=(T^\top T)^{1/2}\succeq sI_c$, where $s=\sigma_{\min}(T)>0$. Since $Q^\top Q=I_c$ and $U^\top U\preceq I_c$,
\[
A=I_c-\tfrac12(Q^\top U+U^\top Q)\succeq0,
\qquad
\fnorm{Q-U}^2\le2\operatorname{tr}(A).
\]
The polar factor $U$, including when $X$ is rank deficient, maximizes $\inner{X}{V}$ over $\|V\|_{\mathrm{op}}\le1$. As $Q$ is feasible, $\inner{X}{Q-U}\le0$. Hence
\[
\frac{s}{2}\fnorm{Q-U}^2
\le\operatorname{tr}(PA)
=\inner{T}{Q-U}
\le\inner{T-X}{Q-U}
\le\fnorm{T-X}\fnorm{Q-U}.
\]
Dividing by $\fnorm{Q-U}$ proves the claim for $r\ge c$ when it is nonzero; the zero case is immediate. For $r<c$, apply this result to $T^\top$ and $X^\top$. Transposition preserves Frobenius norms and singular values, and $\polar(X^\top)=\polar(X)^\top$.
\end{proof}

\begin{theorem}[Polar-error bound under averaging]
\label{thm:polar}
Under Assumptions (\ref{ass:polar-variance})--(\ref{ass:polar-clipping}), suppose $\mu_0\in\mathbb R^{r\times c}$ has rank $\min(r,c)$. Define
\begin{equation}
s_{K,W}=\tau_W\sqrt{\rho+(1-\rho)S_K},
\qquad
D_K=\fnorm{\sum_{j=0}^{K}\alpha_j\mu_j-\mu_0}.
\label{eq:polar-history-terms}
\end{equation}
Then
\begin{equation}
\mathbb E\,d_{\mathrm{polar}}(N,\mu_0)
\le\frac{2}{\sigma_{\min}(\mu_0)}
\left(D_K+\frac{s_{K,W}}{\sqrt m}
+c_\lambda\sqrt{\frac{m-1}{m}}\,s_{K,W}\right).
\label{eq:polar-history-bound}
\end{equation}
The three terms account for staleness, sampling error, and unequal clipping, respectively. No rank condition is imposed on $N$.
\end{theorem}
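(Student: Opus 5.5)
The plan is to reduce everything to Lemma~\ref{lem:polar-lipschitz} with reference $T=\mu_0$, after removing the common clipping scale. Since every $\lambda_i>0$, we have $\bar\lambda>0$. The polar factor is invariant under positive scaling, because $\polar(cX)=\polar(X)$ for $c>0$ and $\polar(0)=0$ covers the degenerate case. Hence $d_{\mathrm{polar}}(N,\mu_0)=d_{\mathrm{polar}}(N/\bar\lambda,\mu_0)$. Applying the lemma to $X=N/\bar\lambda$ and using the full-rank assumption on $\mu_0$ gives
\[
d_{\mathrm{polar}}(N,\mu_0)\le\frac{2}{\sigma_{\min}(\mu_0)}\fnorm{N/\bar\lambda-\mu_0}.
\]
It therefore suffices to bound $\mathbb E\fnorm{N/\bar\lambda-\mu_0}$ by the bracket in Eq.~\eqref{eq:polar-history-bound}. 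No rank condition on $N$ is needed, since the lemma allows a rank-deficient $X$.

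Next, apply Proposition~\ref{prop:residual} with $G_i=H^{(i)}$. This gives $N=\bar\lambda\bar H+R$, where $\bar H=m^{-1}\sum_iH^{(i)}$, so that $N/\bar\lambda-\mu_0=(\bar H-\mu_0)+R/\bar\lambda$. Since $\sum_j\alpha_j=1$, we can split
\[
\bar H-\mu_0=\Bigl(\sum_j\alpha_j\mu_j-\mu_0\Bigr)+\frac1m\sum_i h_i,
\qquad h_i=\sum_j\alpha_j\xi_j(z_i).
\]
The first part has norm exactly $D_K$. For the second part, expand $\mathbb E\fnorm{h_i}^2=\sum_{j,\ell}\alpha_j\alpha_\ell\mathbb E\inner{\xi_j}{\xi_\ell}$ and use assumptions (\ref{ass:polar-variance}) and (\ref{ass:polar-correlation}):
\[
\mathbb E\fnorm{h_i}^2\le\tau_W^2\Bigl(S_K+\rho\bigl(1-S_K\bigr)\Bigr)=s_{K,W}^2,
\]
where $1-S_K=\sum_{j\neq\ell}\alpha_j\alpha_\ell$ because $(\sum_j\alpha_j)^2=1$ and $\alpha_j\ge0$. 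The $h_i$ are i.i.d.\ and centered, so the cross terms vanish and $\mathbb E\fnorm{m^{-1}\sum_ih_i}^2\le s_{K,W}^2/m$. Jensen's inequality then gives the sampling term $s_{K,W}/\sqrt m$.

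For the clipping term, Proposition~\ref{prop:residual} and assumption (\ref{ass:polar-clipping}) give
\[
\fnorm{R}/\bar\lambda\le\sigma_\lambda\sigma_H/\bar\lambda\le c_\lambda\sigma_H,
\qquad
\sigma_H^2=\frac1m\sum_i\fnorm{H^{(i)}-\bar H}^2=\frac1m\sum_i\fnorm{h_i-\bar h}^2,
\]
where $\bar h=m^{-1}\sum_ih_i$ and the population means cancel. The standard sample-variance identity for i.i.d.\ centered vectors gives $\mathbb E\sigma_H^2=\frac{m-1}{m}\mathbb E\fnorm{h_1}^2\le\frac{m-1}{m}s_{K,W}^2$. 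Jensen's inequality then yields $\mathbb E\sigma_H\le\sqrt{(m-1)/m}\,s_{K,W}$. Because $c_\lambda$ is a deterministic constant, the dependence between the $\lambda_i$ and the $H^{(i)}$ causes no difficulty.

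Summing the three expectations by the triangle inequality gives the theorem. The main obstacle is conceptual rather than computational: one must normalize by $\bar\lambda$ before invoking the lemma, so that $R$ enters only through the ratio $\sigma_\lambda/\bar\lambda$ controlled by assumption (\ref{ass:polar-clipping}). One must also verify the sample-variance identity carefully, using independence across examples but only the covariance bound across states.
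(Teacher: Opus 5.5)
Your proposal is correct and follows essentially the same route as the paper's proof: you divide by $\bar\lambda$ (using $\polar(N)=\polar(N/\bar\lambda)$), apply Proposition~\ref{prop:residual} with $G_i=H^{(i)}$ to split $N/\bar\lambda-\mu_0$ into the drift $D_K$, the centered sample mean bounded via the covariance assumptions and independence, and $\fnorm{R}/\bar\lambda\le c_\lambda\sigma_H$ controlled by the sample-variance identity, and then invoke Lemma~\ref{lem:polar-lipschitz} with reference $\mu_0$. The differences are only presentational: you apply the lemma first rather than last, and you spell out details the paper leaves implicit, such as $1-S_K=\sum_{j\ne\ell}\alpha_j\alpha_\ell$ and the cancellation of population means in $\sigma_H$.
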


\begin{proof}
Let $Y_i=\sum_j\alpha_j\xi_j(z_i)$ and $\bar Y=m^{-1}\sum_iY_i$. Applying Proposition~\ref{prop:residual} with $G_i=H^{(i)}$ gives
\begin{equation}
\frac{N}{\bar\lambda}-\mu_0
=\left(\sum_j\alpha_j\mu_j-\mu_0\right)
+\bar Y+\frac{R}{\bar\lambda}.
\label{eq:polar-residual-decomposition}
\end{equation}
The first term has norm $D_K$. For the second, Assumptions (\ref{ass:polar-variance})--(\ref{ass:polar-correlation}) and the nonnegative weights imply
\[
\mathbb E\fnorm{Y_i}^2
\le\tau_W^2\sum_j\alpha_j^2
+\rho\tau_W^2\sum_{j\ne\ell}\alpha_j\alpha_\ell
=\tau_W^2\bigl(S_K+\rho(1-S_K)\bigr)=s_{K,W}^2.
\]
The $Y_i$ are independent with mean zero, so $\mathbb E\fnorm{\bar Y}\le s_{K,W}/\sqrt m$. For the last term, write $\sigma_H^2=m^{-1}\sum_i\fnorm{H^{(i)}-\bar H}^2$. Proposition~\ref{prop:residual} and (\ref{ass:polar-clipping}) give
\[
\frac{\fnorm{R}}{\bar\lambda}\le c_\lambda\sigma_H,
\qquad
\mathbb E\sigma_H
\le\sqrt{\mathbb E\sigma_H^2}
=\sqrt{\frac{m-1}{m}\mathbb E\fnorm{Y_i}^2}
\le\sqrt{\frac{m-1}{m}}\,s_{K,W}.
\]
Finally, $\polar(N)=\polar(N/\bar\lambda)$ because $\bar\lambda>0$. Taking norms and expectations in Eq.~\eqref{eq:polar-residual-decomposition} and applying Lemma~\ref{lem:polar-lipschitz} proves Eq.~\eqref{eq:polar-history-bound}.
\end{proof}

\textbf{When the bound improves.}
Without averaging, $D_0=0$ and $s_{0,W}=\tau_W$. For $\tau_W>0$, averaging gives $s_{K,W}<\tau_W$ when $\rho<1$ and at least two weights are nonzero ($S_K<1$). At the same current state and batch size, use one constant $c_\lambda$ satisfying (\ref{ass:polar-clipping}) for both the averaged and current-only queries. With $a_m=m^{-1/2}+c_\lambda\sqrt{(m-1)/m}$, the right-hand side of Eq.~\eqref{eq:polar-history-bound} is no larger than the current-only bound whenever
\begin{equation}
D_K\le a_m(\tau_W-s_{K,W}).
\label{eq:polar-improvement-condition}
\end{equation}
Strict inequality gives a strictly smaller bound. This compares upper bounds, not actual errors. When $\rho=1$, the variance terms in this bound do not decrease.

If each per-example loss is $\beta$-smooth in the full parameter vector, uniformly over the local randomness, then the block mean is $\beta$-Lipschitz and
\begin{equation}
D_K\le\beta\sum_{j=0}^{K}\alpha_j\|\theta_j-\theta_0\|_2.
\label{eq:polar-staleness}
\end{equation}
This is the smoothness condition used by \citet{xiao2023theory}, with the common conditional evaluation-randomness law specified above. Nearby states control staleness, but do not by themselves imply temporal decorrelation.

\textbf{Proof of Proposition~\ref{prop:residual-envelope}.}
Use the full-query gradients and moment assumptions with $\tau_u$ from Section~\ref{sub:envelope}. Let $u_i=\sum_j\alpha_jg_j(z_i)$, $Y_{i,u}=\sum_j\alpha_j\xi_{j,u}(z_i)$, and $\sigma_u^2=m^{-1}\sum_i\|u_i-\bar u\|_2^2$. The map $n\mapsto\min\{1,C/n\}$ is $1/C$-Lipschitz: its derivative is $0$ for $n<C$ and $-C/n^2\in[-1/C,0)$ for $n>C$. The variance of the $\lambda_i$ is at most their second moment about $\lambda(\|\bar u\|)$, and $|\,\|u_i\|-\|\bar u\|\,|\le\|u_i-\bar u\|$, so $\sigma_\lambda\le\sigma_u/C$. Proposition~\ref{prop:residual} gives $\|R_W\|_F\le\sigma_\lambda\sigma_{H,W}$, and $\sigma_{H,W}\le\sigma_u$ because a block is a sub-collection of coordinates. Hence $\|R_W\|_F\le\sigma_u^2/C$. The full-query moment bounds give $\mathbb E\|Y_{i,u}\|_2^2\le s_{K,u}^2$. Since the means $\sum_j\alpha_j\mu_{j,u}$ cancel in $u_i-\bar u$, the same sample-variance identity used in Theorem~\ref{thm:polar} yields $\mathbb E\,\sigma_u^2=\frac{m-1}{m}\mathbb E\|Y_{1,u}\|_2^2\le\frac{m-1}{m}s_{K,u}^2$. \qed

If the full-query moment bounds in Section~\ref{sub:envelope} hold with equality and $\tau_u>0$, then $\mathbb E\,\sigma_u^2=\frac{m-1}{m}s_{K,u}^2$ exactly. The ratio of the expected empirical variances for averaged and current-only queries is then $\rho+(1-\rho)S_K$.

\end{document}